\providecommand{\tabularnewline}{\\}
\providecommand{\algorithmname}{Algorithm}
\theoremstyle{plain}
\newtheorem{thm}{\protect\theoremname}
\theoremstyle{plain}
\newtheorem{lem}[thm]{\protect\lemmaname}
\providecommand{\lemmaname}{Lemma}
\providecommand{\theoremname}{Theorem}
\begin{document}

\begin{frontmatter}{}

\title{Incorporating Expert Prior in Bayesian Optimisation via Space Warping}

\author{Anil Ramachandran\corref{*}}

\ead{aramac@deakin.edu.au}

\author{Sunil Gupta}

\ead{sunil.gupta@deakin.edu.au}

\author{Santu Rana}

\ead{santu.rana@deakin.edu.au}

\author{Cheng Li}

\ead{cheng.l@deakin.edu.au}

\author{Svetha Venkatesh}

\ead{svetha.venkatesh@deakin.edu.au}

\address{Applied Artificial Intelligence Institute (A\textsuperscript{2}I\textsuperscript{2}),
Deakin University, Australia}

\cortext[{*}]{Corresponding Author}
\begin{abstract}
Bayesian optimisation is a well-known sample-efficient method for
the optimisation of expensive black-box functions. However when dealing
with big search spaces the algorithm goes through several low function
value regions before reaching the optimum of the function. Since the
function evaluations are expensive in terms of both money and time,
it may be desirable to alleviate this problem. One approach to subside
this cold start phase is to use prior knowledge that can accelerate
the optimisation. In its standard form, Bayesian optimisation assumes
the likelihood of any point in the search space being the optimum
is equal. Therefore any prior knowledge that can provide information
about the optimum of the function would elevate the optimisation performance.
In this paper, we represent the prior knowledge about the function
optimum through a prior distribution. The prior distribution is then
used to warp the search space in such a way that space gets expanded
around the high probability region of function optimum and shrinks
around low probability region of optimum. We incorporate this prior
directly in function model (Gaussian process), by redefining the kernel
matrix, which allows this method to work with any acquisition function,
i.e. acquisition agnostic approach. We show the superiority of our
method over standard Bayesian optimisation method through optimisation
of several benchmark functions and hyperparameter tuning of two algorithms:
Support Vector Machine (SVM) and Random forest.
\end{abstract}
\begin{keyword}
Bayesian optimisation, Gaussian process, Black-box function, Probability
integrity transform, Space warping
\end{keyword}

\end{frontmatter}{}

\global\long\def\mF{\mathcal{F}}

\global\long\def\mA{\mathcal{A}}

\global\long\def\mH{\mathcal{H}}

\global\long\def\mX{\mathcal{X}}

\global\long\def\dist{d}

\global\long\def\HX{\entro\left(X\right)}
 \global\long\def\entropyX{\HX}

\global\long\def\HY{\entro\left(Y\right)}
 \global\long\def\entropyY{\HY}

\global\long\def\HXY{\entro\left(X,Y\right)}
 \global\long\def\entropyXY{\HXY}

\global\long\def\mutualXY{\mutual\left(X;Y\right)}
 \global\long\def\mutinfoXY{W\mutualXY}

\global\long\def\given{\mid}

\global\long\def\gv{\given}

\global\long\def\goto{\rightarrow}

\global\long\def\asgoto{\stackrel{a.s.}{\longrightarrow}}

\global\long\def\pgoto{\stackrel{p}{\longrightarrow}}

\global\long\def\dgoto{\stackrel{d}{\longrightarrow}}

\global\long\def\ll{\mathit{l}}

\global\long\def\logll{\mathcal{L}}

\global\long\def\bzero{\vt0}

\global\long\def\bone{\mathbf{1}}

\global\long\def\bff{\vt f}

\global\long\def\bx{\boldsymbol{x}}

\global\long\def\bX{\boldsymbol{X}}

\global\long\def\bW{\mathbf{W}}

\global\long\def\bH{\mathbf{H}}

\global\long\def\bL{\mathbf{L}}

\global\long\def\tbx{\tilde{\bx}}

\global\long\def\by{\boldsymbol{y}}

\global\long\def\bY{\boldsymbol{Y}}

\global\long\def\bz{\boldsymbol{z}}

\global\long\def\bZ{\boldsymbol{Z}}

\global\long\def\bu{\boldsymbol{u}}

\global\long\def\bU{\boldsymbol{U}}

\global\long\def\bv{\boldsymbol{v}}

\global\long\def\bV{\boldsymbol{V}}

\global\long\def\bw{\vt w}

\global\long\def\balpha{\gvt\alpha}

\global\long\def\bbeta{\gvt\beta}

\global\long\def\bmu{\gvt\mu}

\global\long\def\btheta{\boldsymbol{\theta}}

\global\long\def\blambda{\boldsymbol{\lambda}}

\global\long\def\realset{\mathbb{R}}

\global\long\def\realn{\real^{n}}

\global\long\def\natset{\integerset}

\global\long\def\interger{\integerset}

\global\long\def\integerset{\mathbb{Z}}

\global\long\def\natn{\natset^{n}}

\global\long\def\rational{\mathbb{Q}}

\global\long\def\realPlusn{\mathbb{R_{+}^{n}}}

\global\long\def\comp{\complexset}
 \global\long\def\complexset{\mathbb{C}}

\global\long\def\and{\cap}

\global\long\def\compn{\comp^{n}}

\global\long\def\comb#1#2{\left({#1\atop #2}\right) }

\section{Introduction\label{sec:intro}}

Design of a product/process traditionally involves extensive number
of parameters that highly influence the characteristics of the final
outcome and can only be measured through random experiments. The product
could be a physical product such as a new super alloy with room temperature
superconductivity or a highly tuned machine learning model with human-level
object classification accuracy. Often the experiments, i.e. casting
an alloy with a new composition or training a machine learning model
with new hyperparameters can be costly both in terms of money and
time. Hence it is necessary to reach the target goal with as few experiments
as possible. The relationship between these tunable parameters and
the output characteristics can be represented by a mathematical function.
However, the underlying function behind many complex problems (e.g.
casting an alloy) are not explicitly known (black-box) and expensive
to evaluate. Therefore na\"ive approaches like random or grid search
become inefficient with increasing experimental complexity.

Bayesian optimisation has recently become an established and sample-efficient
approach for the optimisation of such black-box functions and found
several interesting applications in miscellaneous domains: such as
reducing the extensive number of experiments that are usually required
for a good material design \citep{Frazier_Wang_16Bayesian}, designing
high-strength alloys \citep{Vahid_etal_18New}, designing graphene
thermoelectrics \citep{Yamawaki_etal_18Multifunctional}, optimisation
of short polymer fiber synthesis \citep{Li_etal_17Rapid}, designing
renewable energy systems and real-time control \citep{Irani_18Real},
optimisation of robot gait parameters \citep{Lizotte_etal_07Automatic,Martinez_etal_09ABayesian},
environmental monitoring and sensor set selection \citep{Garnett_etal_10Bayesian}
and hyperparameter tuning of machine learning models \citep{Snoek_etal_12Practical}.

Bayesian optimisation is a sequential model based approach where it
initially builds a prior over the unknown objective function and iteratively
chooses the next function evaluation point by optimising a cheap surrogate
function. The surrogate function selects the evaluation points by
keeping a balance between exploiting the region where function value
is expected to be optimal (exploitation) and exploring the region
where the uncertainty about the function value is high (exploration).
With minimum evaluation points, Bayesian optimisation can reach the
global optimum of expensive black-box functions and this makes the
Bayesian optimisation an ideal candidate for the optimisation of such
unknown functions. However, when dealing with big search spaces, the
algorithm visits low function value regions more often before reaching
the optimum. The main reason for this is that Bayesian optimisation
assumes that every point in the search space is equally likely to
be optimum. We call this as cold start phase of Bayesian optimisation.
For complex models with big search spaces and large amount of data,
this cold start phase can be extremely expensive and wasteful of resources
and it is desirable to shorten this cold start phase. 

In order to shorten the cold start phase and thus to improve the optimisation
efficiency, one can make use of any available prior knowledge about
the solution. There are generally two types of prior knowledge which
are used to construct several Bayesian optimisation frameworks in
literature. (1) Prior knowledge of related function evaluation data
from previous optimisations \citep{Bardenet_etal_13Collaborative,Yogatama_Mann_14Efficient,Feurer_etal_15Initializing,Joy_etal_16Flexible,Feurer_etal_18Scalable,Ramachandran_etal_18Selecting,Ramachandran_etal_18Information,Swersky_etal_13Multi},
(2) Prior knowledge about function itself. Since the former methods
require data from similar function optimisations, they become infeasible
for the optimisation of complex models. The latter approach can be
further divided into two types. (i) The function may not be a complete
black-box rather, we may know its behavior particularly e.g. we may
know that the output is monotonically increasing or decreasing with
respect to a particular variable or the output is uni-modal along
certain variable etc, (ii) Experimenter are usually expert of their
field and may have some intuition of ``good'' and ``bad'' regions.
This knowledge can be used as a prior over the optimum location. Although
there has been research on developing methods to incorporate former
type of prior knowledge \citep{Li_etal_18Accelerating,Vellanki_etal_19Bayesian},
there has been no work to incorporate the latter type of prior. Since
the likelihood of searching the optimum is equally likely along the
whole space, obtaining some expert prior information about the optimum
location is, certainly, useful. For this reason, a Bayesian optimisation
framework that can incorporate domain expert prior regarding the optimum
location is \emph{still an open problem}.

In this paper, we propose a new Bayesian optimisation framework that
can incorporate prior knowledge about optima location and thus can
accelerate the optimisation efficiency. To incorporate such knowledge,
we represent the prior knowledge about the optimum location through
a prior distribution. We then use this prior distribution to warp
the search space. We use probability integrity transform to achieve
the warping. The transformation of search space based on the prior
assumption enables the Bayesian optimisation to encourage function
evaluations to come from high likelihood regions as indicated by the
prior. This is achieved through incorporating the warped space information
directly into the Gaussian process by redefining the kernel matrix,
which makes the method agnostic to any acquisition function. We prove
that the new kernel (we call it as \emph{warped kernel}) is a valid
Mercer kernel. In addition, we also show that the convergence of new
Bayesian optimisation framework is always guaranteed with any prior
distribution even if it is misspecified as long as it has a non-zero
likelihood for the true optimum. We validate our new Bayesian optimisation
method through application to optimisation of several benchmark functions
and hyperparameter tuning of two algorithms: Support Vector Machine
(SVM) and Random Forest.

\section{Background}

\sloppy

Let $f:\mathcal{X}\rightarrow\mathbb{R}$ be an expensive black-box
(unknown) function defined over a compact set $\mathcal{X}\subseteq\mathbb{R}^{d}$
and we need to find the maximum of this function. That is, we need
to find a point $\mathbf{x}_{*}$ in $\mathcal{X}$ such that $\mathbf{x}_{*}=\underset{{\bf x\epsilon}\mathcal{X}\subset\mathbb{R}^{d}}{\text{argmax}}f(\mathbf{x})$.
Bayesian optimisation is a sample efficient solution for the optimisation
of these type of expensive black-box functions. Next, we provide an
overview of various components of Bayesian optimisation.

\subsection{Gaussian Process}

As the function $f(\mathbf{x})$ is an unknown, black-box function,
Bayesian optimisation builds a surrogate model of the function, generally
using Gaussian process (GP) \citep{Rasmussen_Williams_06Gaussian},
random forests \citep{Hutter_etal_11Sequential} or Bayesian neural
networks \citep{Springenberg_etal_16Bayesian}. This paper focuses
on GP modeling because the model is robust and at the same time it
is easily analyzable. Let us say, we have some perturbed function
evaluation data comprising of $n$ observations ($\mathcal{D}_{n}=\left(\mathbf{x}_{i},y_{i}\right)\mid i=1,2,...,n$)
and need to make prediction at a new input $\tilde{\mathbf{x}}$ which
is not part of the data, $\mathcal{D}_{n}$. In such a case, GP prior
gives a probabilistic view for every possible functions using a mean
function $\left(\mu:\mathcal{X}\rightarrow\mathbb{R}\right)$ and
a kernel function $\left(k:\mathcal{X}\times\mathcal{X}\rightarrow\mathbb{R}\right)$
\citep{Rasmussen_Williams_06Gaussian}. It is assumed that, the function
$f(\mathbf{x})$ can be sampled as $f({\bf x})\sim\text{GP}(\mu({\bf x}),k({\bf x},{\bf x}^{'}))$
where $\mu({\bf x})$ is a mean function, and $k({\bf x},{\bf x}^{'})$
is a kernel function. Since the mean function can be set to zero without
loss in generality, we assume it as a zero function for the simplicity
of the modeling. Kernel function specifies the correlation between
the function values at any two points ${\bf x}$ and ${\bf x}^{'}$.
There are many kernel functions including squared exponential kernel,
Mat\'ern kernel and linear kernel. As an example, squared exponential
kernel computes the covariance between function values at any two
points $\mathbf{x}$ and $\mathbf{x}'$ as 
\begin{equation}
k({\bf x},{\bf x}^{'})=\sigma_{0}^{2}\exp(-\frac{1}{2l^{2}}\parallel{\bf x}-{\bf x}^{'}\parallel^{2})\label{eq:SE kernel}
\end{equation}
where $l$ is a length scale parameter related to the smoothness of
the function and $\sigma_{0}^{2}$ is the function variance. The function
values $f({\bf x}_{1}),\ldots,f({\bf x}_{n})$ follow a multivariate
Gaussian distribution $\mathcal{N}(0,{\bf K}),$ where ${\bf K}(i,i^{'})=k({\bf x}_{i},{\bf x}_{i^{'}})$
and the noisy function measurements (outputs), $y_{i}=f({\bf x}_{i})+\epsilon_{i}$
where $\epsilon_{i}\sim\mathcal{N}(0,\sigma^{2})$ being the measurement
noise. Given a new query point $\tilde{\mathbf{x}}$ , the function
values $f_{1:n}$ and $f(\tilde{\mathbf{x}})$ are jointly Gaussian
and can be written as 
\begin{equation}
\left[\begin{array}{c}
f_{1:n}\\
f(\tilde{\mathbf{x}})
\end{array}\right]\sim\mathcal{N}\left(0,\left[\begin{array}{cc}
\mathbf{K} & \mathbf{k}\\
\mathbf{k}^{T} & k(\tilde{\mathbf{x}},\tilde{\mathbf{x}})
\end{array}\right]\right)\label{eq:joint Gaussian}
\end{equation}
where $\mathbf{k}=\left[\begin{array}{cccc}
k(\tilde{\mathbf{x}},{\bf x}_{1}) & k(\tilde{\mathbf{x}},{\bf x}_{2}) & \ldots & k(\tilde{\mathbf{x}},{\bf x}_{n})\end{array}\right]$. This leads to the predictive distribution as $p(f(\tilde{\mathbf{x}})\mid\mathcal{D}_{1:n},\tilde{\mathbf{x}})=\mathcal{\mathcal{N}}(\mu_{n}(\tilde{\mathbf{x}}),\sigma_{n}^{2}(\tilde{\mathbf{x}}))$,
where the predictive mean and variance are respectively given by $\mu_{n}(\tilde{\mathbf{x}})=\mathbf{k^{T}\left[K+\sigma^{2}\mathbf{I}\right]^{-1}}f_{1:n}$
and $\sigma_{n}^{2}(\tilde{\mathbf{x}})=k(\tilde{\mathbf{x}},\tilde{\mathbf{x}})-\mathbf{k^{T}\left[K+\sigma^{2}\mathbf{I}\right]^{-1}k}$.

\subsection{Acquisition functions}

So far, we have discussed how to build a probabilistic model for a
smooth function using a Gaussian process and how to update this prior
after gathering new observations. Next, we will give a brief overview
about another important component of Bayesian optimisation called
acquisition function - used to recommend the next promising function
evaluation point. The acquisition function is a cheap surrogate to
the $f({\bf x})$ and is constructed using the posterior information
derived by combining the GP prior with observed data, $\mathcal{D}_{n}$.
While recommending new query points, acquisition function keeps a
trade-off between exploitation and exploration criteria which eventually
guide towards the optimum of the objective function, i.e. for a function
$f(\mathbf{x})$, the next point is sampled as $\mathbf{x}_{n+1}=\underset{{\bf x\epsilon}\mathcal{X}}{\text{argmax }}\alpha\left(\mathbf{x}\mid\mathcal{D}_{n}\right)$.
Popular choices of acquisition functions are Expected Improvement
(EI) \citep{Mockus_etal_78_Theapplication}, GP-UCB \citep{Srinivas_etal_12Information}
and Predictive Entropy Search (PES) \citep{Hernandez-Lobato_etal_14Predictive}.
Since predictive entropy search is computationally expensive, we perform
experiments using EI and GP-UCB.

\textbf{Expected Improvement (EI)} maximizes the expected improvement
over the current best observation. We can write the improvement function
as $\textrm{I}({\bf x})=\max\{0,f_{n+1}({\bf x})-f({\bf x}^{+})\}$
where ${\bf x}^{+}=\underset{{\bf x}_{i}\epsilon{\bf x}_{1:n}}{\text{argmax}}f({\bf x}_{i})$.
If the predicted value is greater than the current best value, $\textrm{I}(\mathbf{x})$
is set to positive, else it is set to zero. Then the new evaluation
point can be written as ${\bf x}=\underset{{\bf x}}{\text{argmax }}\mathbb{E}(\textrm{I}({\bf x})/\mathcal{D}_{1:n})$.
Analytic form of EI can be written as

\begin{equation}
\alpha_{n}(\mathbf{x})=\begin{cases}
(\mu_{n}({\bf x})-f({\bf x}^{+}))\Phi(z)+\sigma_{n}({\bf x})\phi(z) & \textrm{if }\sigma_{n}({\bf x})>0\\
0 & \textrm{if }\sigma_{n}({\bf x})=0
\end{cases}\label{eq:EI}
\end{equation}
where $z=\frac{\mu_{n}({\bf x})-f({\bf x}^{+})}{\sigma_{n}({\bf x})}$.
$\phi(.)$ and $\Phi(.)$ are respectively the probability distribution
function (pdf) and cumulative distribution function (cdf) of a standard
normal distribution.

\textbf{GP-UCB} selects the next observation point based on an upper
confidence bound of predictive distribution. GP-UCB selects the next
function evaluation point by the following maximization 
\begin{equation}
\alpha_{n}(\mathbf{x})=\underset{{\bf x\epsilon}\mathcal{D}}{\text{argmax}}\left(\mu_{n}({\bf x})+\surd\gamma_{n}\sigma_{n}({\bf x})\right)\label{eq:UCB}
\end{equation}
where $\gamma_{n}=2\log\left(2\pi_{n}/\delta\right)+4d\log\left(dnbr\sqrt{\log\left(2da/\delta\right)}\right)$
is $n$-dependent weight derived in \citep{Srinivas_etal_12Information}
to ensure global optimisation. 

\subsection{Related works}

Standard Bayesian optimisation is a sequential approach where it assumes
the likelihood of any point being the function optimum is equal along
the search space. This may cause the algorithm to spend more evaluation
budget near low function values and consequentially make the optimisation
costly especially when dealing with big search spaces. One way to
help the algorithm to jump quickly towards the optima location is
to incorporate any available prior knowledge about the optimum. Such
prior knowledge can be of two types:

1. Prior knowledge from related function evaluations which are already
optimised.

2. Prior knowledge regarding the solution from domain experts.

There are several settings in literature which utilize the knowledge
from previous function optimisations to boost the optimisation performance
of the function (target), $f(\mathbf{x})$. One crucial requirement
while transferring the knowledge from previous function optimisations
is the relatedness between source (previous data) and the target.
Preliminary works in transfer learning for Bayesian optimisation assume
that the source and target are related and transfer the knowledge
without checking the relatedness \citep{Bardenet_etal_13Collaborative,Yogatama_Mann_14Efficient}.
Another similar approach that made assumption regarding the source/target
similarity and transfer the source information in a multi-task setting
is presented in \citep{Swersky_etal_13Multi}. However, performance
of these methods drastically reduce when an unrelated source is used.
To avoid this performance degradation, alternate transfer learning
methods that consider the source/target similarity are proposed in
\citep{Joy_etal_16Flexible,Shilton_etal_17Regret,Ramachandran_etal_18Selecting}.
Later, Feurer et al. \citep{Feurer_etal_15Initializing,Feurer_etal_18Scalable}
proposed meta-learning approaches that use meta-features to measure
the source/target similarity. All the above methods assume function
space similarity rather than a more aligned similarity based on the
optima locations of the function. Recently, Ramachandran et.al. \citep{Ramachandran_etal_18Information}
proposed a transfer learning approach, where similarity across a source
and target is estimated based on their optima locations. This similarity
is then used to bias the distribution of the target function optimum
through a mixture distribution. 

There are many practical situations in which the data from the previous
related optimisations (source functions) may not be available. However,
some useful prior knowledge about the function may still be available.
This type of prior knowledge can be divided into two types. 

(i) Relationship of the function with its variables (e.g. monotonicity).
In a recent method, proposed by Li et al. \citep{Li_etal_18Accelerating}
uses \textquoteleft experimenter hunches\textquoteright{} that defines
the underlying behavior of the experimental system. That is, authors
incorporated monotonic information about the variables using a two-stage
Gaussian process modeling. First stage models the monotonic trend
in the underlying property. Several points are sampled from this GP
and then combined with existing target observations to build a new
GP in second stage. Authors proved the theoretical consistency of
this method by providing a regret bound. Another similar approach
proposed by Vellanki et al. \citep{Vellanki_etal_19Bayesian} incorporates
the shape of the control function as a prior information by modeling
it using a Bernstein polynomial basis. Instead of directly optimising
this control function, authors optimise its Bernstein coefficients. 

(ii) Knowledge about the optima region of the underlying function
from domain knowledge. A domain expert often has a vague understanding
of good and bad regions of the function. Such prior knowledge about
the optima location can be directly used to cut down the search space
away from the optima and thus helps to improve the convergence of
the target function quickly. However, there has been no work to incorporate
this type of prior knowledge and remains\emph{ an open problem}.

\section{Proposed Method}

Our goal is to propose a computationally feasible Bayesian optimisation
method that can incorporate any available prior knowledge about the
function optimum to improve the optimisation efficiency. We aim to
develop a method that is \emph{agnostic} to any acquisition function.
Since Bayesian optimisation generally assumes that any point in the
search space is equally likely to be the optimum, any informative
prior knowledge about the optimum of $f(\mathbf{x})$ is most likely
to give a desirable boost for finding the optima of the function.
A domain expert's vague understanding of good and bad regions can
be approximately represented by a prior distribution, $p(\mathbf{x}_{*})$
that provides a likelihood of any point $\mathbf{x}$ being the optimum.
This prior distribution can then be used to differentially warp the
search space in a manner such that the regions more likely to contain
the optimum are expanded while the regions less likely to contain
the optimum are shrunk. We use a well known technique called probability
integral transform for warping the search space. Next we provide a
discussion about probability integral transform and how we can effectively
incorporate the prior knowledge to improve the optimisation efficiency
by warping the search space.

\subsection{Probability Integral Transform}

Probability integral transform, as the name suggests, transforms any
continuous random variable to a random variable following uniform
distribution. The following \textbf{Lemma \ref{lem:lemma1}} formally
presents the probability integral transform.
\begin{figure}
\centering{}\subfloat[]{\centering{}\includegraphics[width=0.5\textwidth]{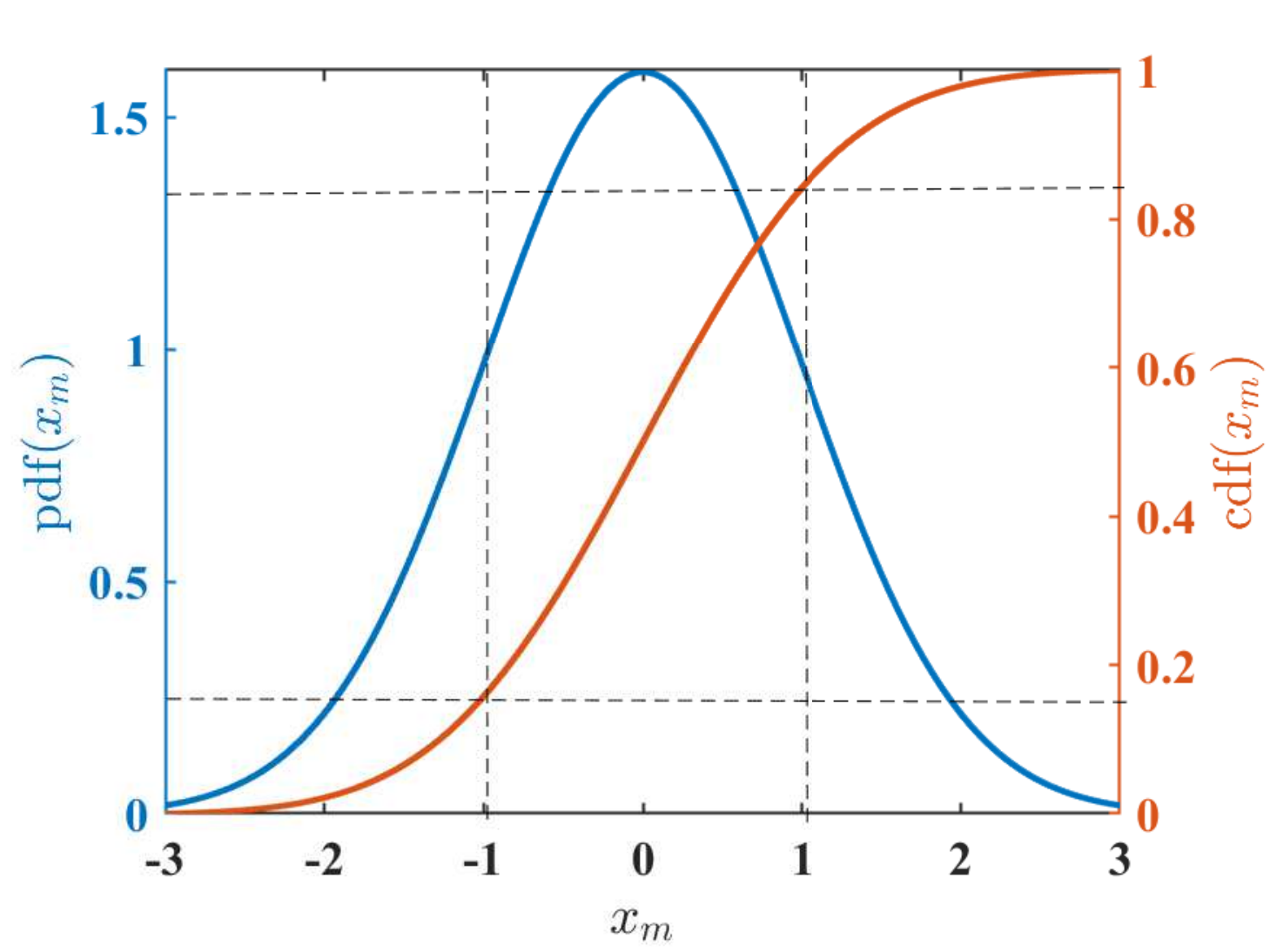}}\caption{Probability distribution function and cumulative distribution function
of an one-dimensional Gaussian function.\label{fig:pdf and cdf - Chapter 5}}
\end{figure}

\begin{lem}
\emph{\label{lem:lemma1}Let $x$ be any real-valued random variable
following a continuous distribution and $\Phi_{\mathbf{x}}$ be its
cumulative distribution function (cdf), then any random variable $\mathbf{u}=\Phi_{\mathbf{x}}(x)$
follows a uniform distribution $\mathcal{U}(0,1)$ defined on} $(0,1)$. 
\end{lem}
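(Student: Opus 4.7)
The plan is to directly compute the cumulative distribution function of $u := \Phi_{x}(x)$ and verify that it coincides with $F_{\mathcal{U}(0,1)}(t)=t$ on $(0,1)$. Since $\Phi_{x}$ takes values in $[0,1]$, $u$ is automatically supported on $[0,1]$, which immediately disposes of the boundary cases $F_{u}(t)=0$ for $t\le 0$ and $F_{u}(t)=1$ for $t\ge 1$. The substantive work is therefore to establish $P(u\le t)=t$ for every $t\in(0,1)$.

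First I would treat the clean subcase in which $\Phi_{x}$ is strictly increasing, which covers the prior distributions most relevant to the warping construction later in the paper. There $\Phi_{x}$ admits a genuine inverse $\Phi_{x}^{-1}$ on the interior of its range, so the event $\{\Phi_{x}(x)\le t\}$ rewrites cleanly as $\{x\le \Phi_{x}^{-1}(t)\}$, and applying the definition of the CDF gives $P(x\le \Phi_{x}^{-1}(t))=\Phi_{x}(\Phi_{x}^{-1}(t))=t$. This handles the generic case in a couple of lines.

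The step that requires more care, and which I expect to be the main obstacle, is the general continuous case in which $\Phi_{x}$ may be flat on whole intervals and hence is not pointwise invertible. Here I would introduce the generalised quantile $\Phi_{x}^{-1}(t):=\inf\{y\in\mathbb{R} : \Phi_{x}(y)\ge t\}$ and exploit two facts: continuity of $\Phi_{x}$ forces $\Phi_{x}(\Phi_{x}^{-1}(t))=t$ for every $t\in(0,1)$, and the events $\{\Phi_{x}(x)\le t\}$ and $\{x\le \Phi_{x}^{-1}(t)\}$ differ only on a set of $x$-probability zero, because a continuous distribution assigns zero mass to any single point (so the values of $x$ sitting on a flat plateau of $\Phi_{x}$ can be safely redistributed without changing probabilities). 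Substituting and invoking the CDF definition once more yields $P(u\le t)=t$, which together with the boundary discussion above identifies the law of $u$ as $\mathcal{U}(0,1)$ and completes the proof.
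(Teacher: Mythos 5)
Your proposal is correct, and its first half is exactly the paper's argument: the paper computes $\Phi_{\mathbf{u}}(u)=\mathrm{P}(\Phi_{\mathbf{x}}(x)\leq u)=\mathrm{P}(x\leq\Phi_{\mathbf{x}}^{-1}(u))=\Phi_{\mathbf{x}}(\Phi_{\mathbf{x}}^{-1}(u))=u$ for $u\in(0,1)$, implicitly assuming $\Phi_{\mathbf{x}}$ is invertible (strictly increasing), and then differentiates to read off the uniform pdf. You go further in two ways: you dispose of the boundary values of the cdf of $u$ directly instead of differentiating, and, more substantively, you handle the general continuous case where $\Phi_{\mathbf{x}}$ has flat stretches by using the generalised quantile $\Phi_{\mathbf{x}}^{-1}(t)=\inf\{y:\Phi_{\mathbf{x}}(y)\geq t\}$, noting that continuity gives $\Phi_{\mathbf{x}}(\Phi_{\mathbf{x}}^{-1}(t))=t$ and that the two events $\{\Phi_{\mathbf{x}}(x)\leq t\}$ and $\{x\leq\Phi_{\mathbf{x}}^{-1}(t)\}$ differ only on a null set. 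That extra care buys a proof valid for every continuous distribution, whereas the paper's identity $\mathrm{P}(\Phi_{\mathbf{x}}(x)\leq u)=\mathrm{P}(x\leq\Phi_{\mathbf{x}}^{-1}(u))$ is only literally justified when $\Phi_{\mathbf{x}}$ is a bijection onto $(0,1)$. One small fix in your justification: the exceptional set is the plateau $\{y>\Phi_{\mathbf{x}}^{-1}(t):\Phi_{\mathbf{x}}(y)=t\}$, which is an interval, so ``single points have zero mass'' is not by itself the right reason; the correct reason is that $\Phi_{\mathbf{x}}$ is constant on that interval, so its probability is $\Phi_{\mathbf{x}}(b)-\Phi_{\mathbf{x}}(\Phi_{\mathbf{x}}^{-1}(t))=t-t=0$ for any $b$ in the plateau. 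With that wording repaired, your argument is complete and strictly more general than the one in the paper.
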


\begin{proof}
We have defined, for any random variable, \emph{$\mathbf{u}=\Phi_{\mathbf{x}}(x)$.
}Then 
\begin{align*}
\Phi_{\mathbf{u}}(u) & =\textrm{P}(\mathbf{u}\leq u)=\textrm{P}(\Phi_{\mathbf{x}}(x)\leq u),\textrm{ for any }u\epsilon(0,1)\\
 & =\textrm{P}(x\leq\Phi_{\mathbf{x}}^{-1}(u))=\Phi_{\mathbf{x}}(\Phi_{\mathbf{x}}^{-1}(u))=u
\end{align*}
Taking derivative with respect to $u$, we obtain the probability
distribution function (pdf) of $\mathbf{u}$ as $p(u)=1$, $u\epsilon(0,1)$
which is $\mathcal{U}(0,1)$ with support $(0,1)$ \citep{Angus_94TheProbability}.
\end{proof}
Next we illustrate in Figure \ref{fig:pdf and cdf - Chapter 5}, how
transforming a variable $x_{m}$ through the prior cdf $\Phi_{m}$
leads to expansion and shrinkage of the space along the variable.
We note that based on the $p(x_{*m})$ constructed in Figure \ref{fig:pdf and cdf - Chapter 5},
region $\left[-1,1\right]$ is high likelihood region while $\left[-3,1\right]\cup\left[1,3\right]$
are low likelihood regions. We can see that the region $\left[-1,1\right]$
of $x_{m}$ gets transformed to space $\left[0.18,0.82\right]$ which
is nearly $64\%$ of the space $\left[0,1\right]$ (Expansion). On
the contrary, the region $\left[-3,1\right]\cup\left[1,3\right]$
get transformed to $\left[0,0.18\right]\cup\left[0.82,1\right]$ which
is nearly $36\%$ of the space $\left[0,1\right]$ (Shrinkage). \emph{Essentially,
$\Phi_{m}(x_{m})$ warps the space in such a way so that in the warped
space the distribution of the optimum becomes uniform.}

\subsection{Bayesian Optimisation in the Warped Space}

Next we will discuss how the optimisation performance improves while
transforming the original search space into the warped space. For
that, let us consider a two-dimensional Levy function and plot its
function evaluation points as recommended by Bayesian optimisation
using both search spaces (original and warped) as shown in Figure
\ref{fig:Sampled points Levy- Chapter 5}. In Figure \ref{fig:sampled points both spaces - Chapter 5},
in warped space (blue), Bayesian optimisation is able to quickly identify
the region near the function minima and recommend samples near that.
On the other hand, in original space (red), Bayesian optimisation
initially samples far from the minimum region. However, as iterations
increase, it is able to recommend points near the minimum of the function.
Therefore, Bayesian optimisation in warped space reaches the optimum
faster when compared to working in the original space. 
\begin{figure}
\centering{}\subfloat[\label{fig:2D Levy - Chapter 5}]{\centering{}\includegraphics[width=0.5\textwidth]{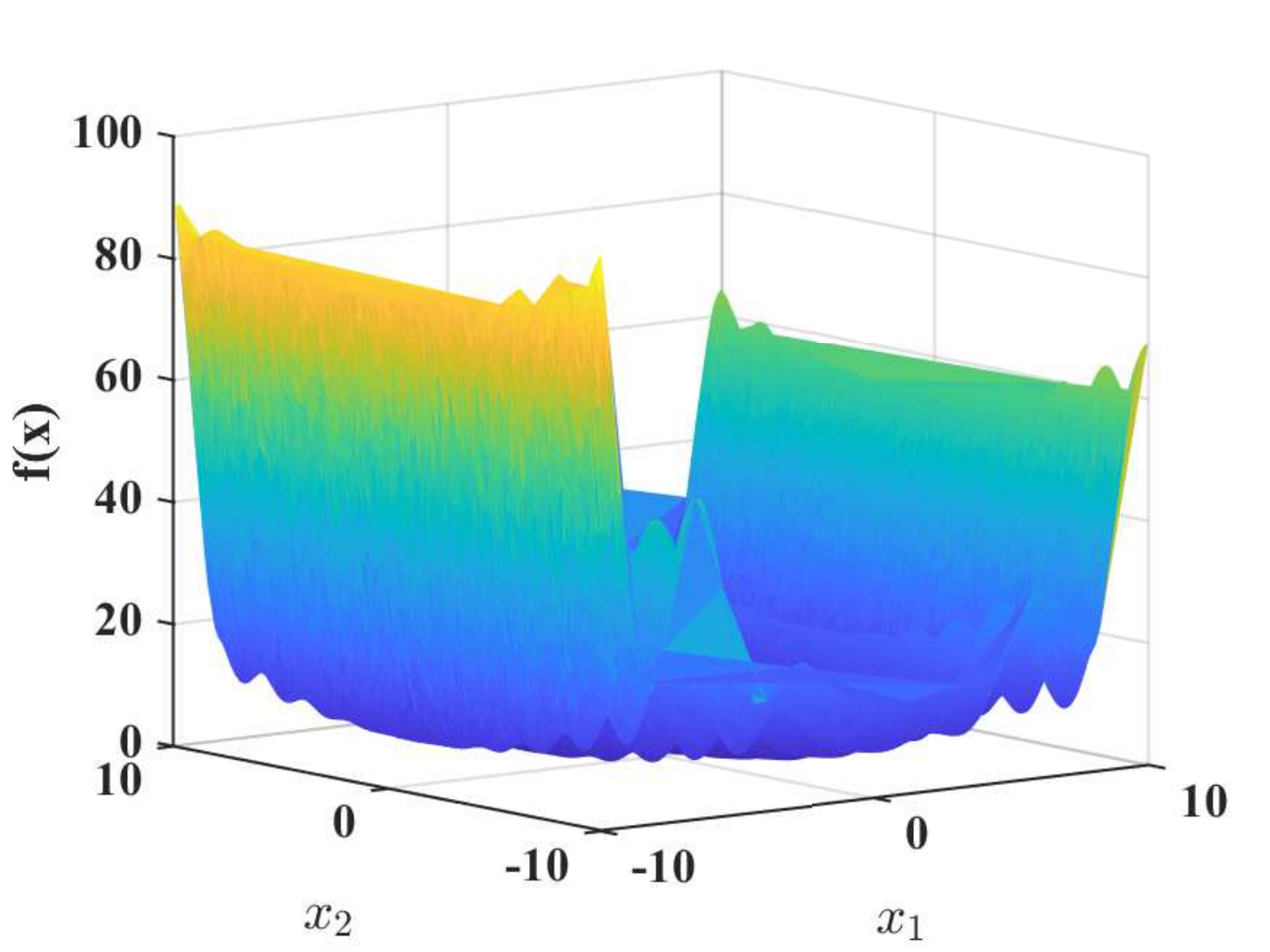}}\subfloat[\label{fig:sampled points both spaces - Chapter 5}]{\centering{}\includegraphics[width=0.5\textwidth]{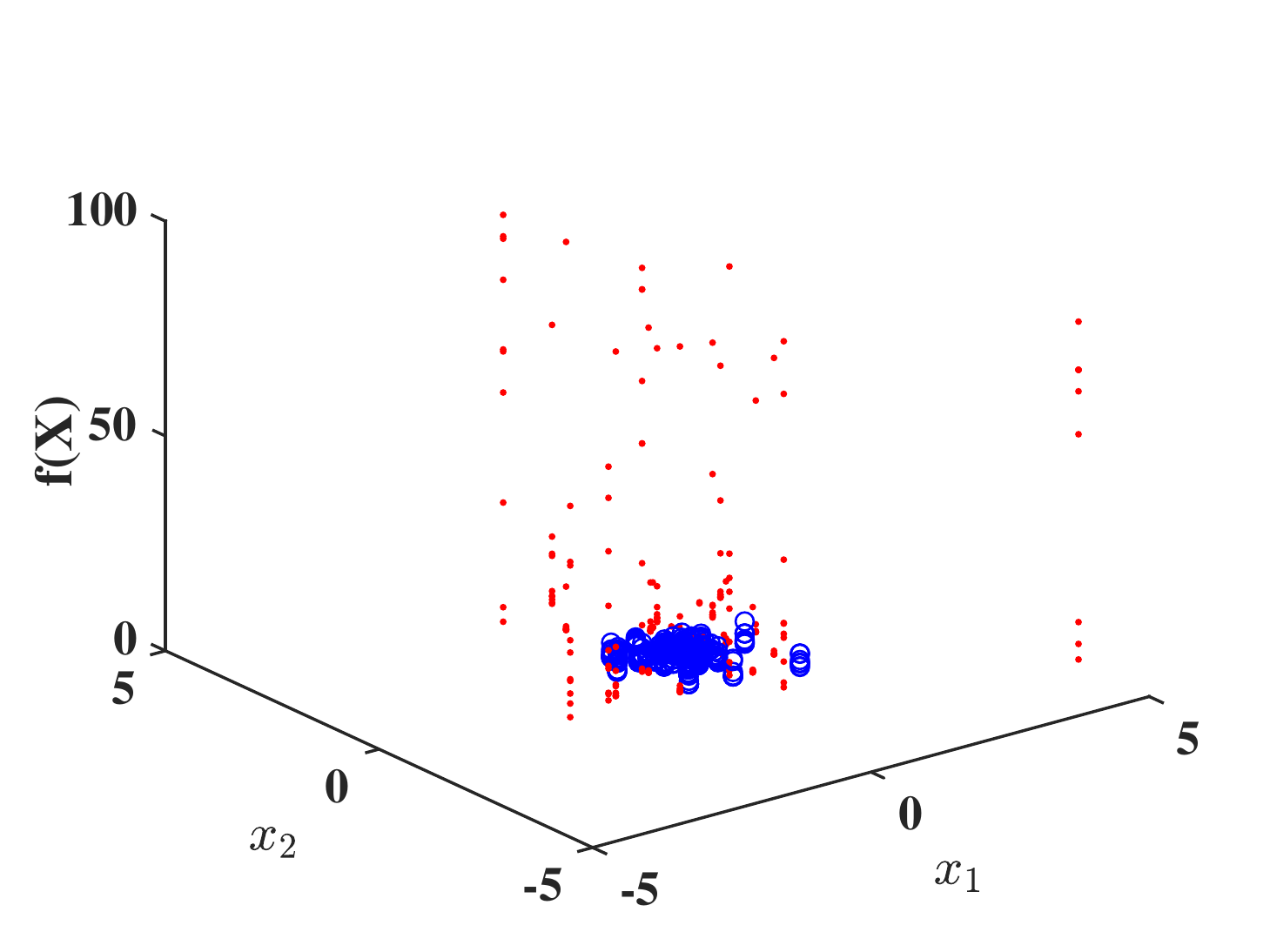}}\caption{$60$ sample points recommended by Bayesian optimisation using both
original and warped space for the two-dimensional Levy function.\label{fig:Sampled points Levy- Chapter 5}}
\end{figure}

Next we describe how to use such warping into Bayesian optimisation.
Particularly, we have two goal here. (i) how to incorporate warping
into Bayesian optimisation efficiently and (ii) how to incorporate
warping into Bayesian optimisation so that it is agnostic to the acquisition
function. Our solution to the above requirement is to intervene into
Gaussian process directly and amend its kernel. For that, let us define
$p(x_{*m})\triangleq p(f(x_{m})\geq f(x_{m}^{'})),\,x_{m}\neq x_{m}^{'}$
as prior distribution over $x_{m}$. The new kernel (\emph{warped
kernel}), $\tilde{k}_{m}(x_{m},x_{m}^{'})$ (can be extended to any
kernel, but here we explain based on the squared exponential kernel)
can be written as
\[
\tilde{k}_{m}(x_{m},x_{m}^{'})=\left(\sigma_{0}^{2}\right)^{d}\exp(-\frac{1}{2l^{2}}\mid\Phi_{m}(x_{m})-\Phi_{m}(x_{m}^{'})\mid^{2})
\]
\begin{equation}
\tilde{k}({\bf x},{\bf x}^{'})=\prod_{m=1}^{d}\tilde{k}_{m}(x_{m},x_{m}^{'})\label{eq: new modified SE kernel}
\end{equation}
where $d$ is the input space dimension and $\Phi_{m}(x_{m})$ and
$\Phi_{m}(x_{m}^{'})$ be the cdf transformations of the points $x_{m}$
and $x_{m}^{'}$ along dimension $m$ respectively. In the following,
we define a\textbf{ Lemma} \textbf{\ref{lem:Lemma2}} that formally
proves the warped kernel, $\tilde{k}({\bf x},{\bf x}^{'})$ is a valid
Mercer kernel, and therefore can be used in Gaussian process to model
the covariance between two points.
\begin{lem}
\emph{\label{lem:Lemma2}The warped kernel $\tilde{k}({\bf x},{\bf x}^{'})$
is a valid Mercer kernel.}
\end{lem}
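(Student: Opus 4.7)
The plan is to reduce the claim to two well-known closure properties of Mercer kernels: (i) if $k_0$ is a Mercer kernel on some space $\mathcal{U}$ and $\phi:\mathcal{X}\to\mathcal{U}$ is any map, then $k(\mathbf{x},\mathbf{x}'):=k_0(\phi(\mathbf{x}),\phi(\mathbf{x}'))$ is a Mercer kernel on $\mathcal{X}$, and (ii) a pointwise product of Mercer kernels on the same space is again a Mercer kernel (the Schur product theorem). Once these two facts are available, the result follows with essentially no calculation.

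First I would treat each factor $\tilde{k}_m$ separately. The one-dimensional squared-exponential kernel $k_{\text{SE}}(u,u')=\sigma_0^{2d}\exp\!\bigl(-\tfrac{1}{2l^2}(u-u')^2\bigr)$ on $\mathbb{R}$ is a classical Mercer kernel (positive scalar multiples of Mercer kernels remain Mercer, so absorbing the constant $(\sigma_0^2)^d$ causes no issue). Since $\Phi_m:\mathcal{X}_m\to[0,1]$ is simply a real-valued map, property (i) applied with $\phi=\Phi_m$ shows that
\[
\tilde{k}_m(x_m,x_m')=k_{\text{SE}}\bigl(\Phi_m(x_m),\Phi_m(x_m')\bigr)
\]
is a valid Mercer kernel on $\mathcal{X}_m$. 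Concretely, for any finite set of inputs $x_m^{(1)},\ldots,x_m^{(n)}$, the Gram matrix $[\tilde{k}_m(x_m^{(i)},x_m^{(j)})]_{ij}$ coincides with the Gram matrix of $k_{\text{SE}}$ evaluated at the transformed points $\Phi_m(x_m^{(1)}),\ldots,\Phi_m(x_m^{(n)})\in[0,1]$, hence is positive semi-definite.

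Next I would lift each one-dimensional kernel to the full input space $\mathcal{X}\subseteq\mathbb{R}^d$ by regarding $\tilde{k}_m$ as a kernel on $\mathcal{X}$ that depends only on the $m$-th coordinate; this preserves positive semi-definiteness of Gram matrices trivially. Then the product
\[
\tilde{k}(\mathbf{x},\mathbf{x}')=\prod_{m=1}^{d}\tilde{k}_m(x_m,x_m')
\]
is a Mercer kernel by property (ii), applied inductively $d-1$ times. This concludes the argument.

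I do not anticipate a real obstacle: the only thing to be careful about is making the composition step explicit, namely that plugging a fixed deterministic transformation (the marginal cdf $\Phi_m$) into a valid kernel cannot destroy positive semi-definiteness, because the Gram matrix is literally a Gram matrix of the base kernel at the transformed points. If a reader were unfamiliar with the Schur product theorem, one could alternatively exhibit the warped kernel directly as the $d$-dimensional squared-exponential kernel evaluated at $\boldsymbol{\Phi}(\mathbf{x}):=(\Phi_1(x_1),\ldots,\Phi_d(x_d))$, since
\[
\prod_{m=1}^{d}\exp\!\Bigl(-\tfrac{1}{2l^{2}}|\Phi_m(x_m)-\Phi_m(x_m')|^{2}\Bigr)=\exp\!\Bigl(-\tfrac{1}{2l^{2}}\|\boldsymbol{\Phi}(\mathbf{x})-\boldsymbol{\Phi}(\mathbf{x}')\|^{2}\Bigr),
\]
so a single appeal to closure property (i) with $\phi=\boldsymbol{\Phi}$ suffices, avoiding the Schur product theorem altogether.
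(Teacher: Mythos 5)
Your argument is correct, and its first version follows the paper's overall decomposition (per-coordinate factors combined via the Schur product theorem), but it justifies the key step differently. For each one-dimensional factor the paper shows positive semi-definiteness from scratch: it views $\tilde{k}_m$ as a stationary function of $\Phi_m(x_m)-\Phi_m(x_m')$, invokes Bochner's theorem to write it as $\mathbb{E}\left[\exp\left(-i\mathbf{w}^{T}(\Phi_m(x_m)-\Phi_m(x_m'))\right)\right]$, and checks that the quadratic form becomes $\mathbb{E}\left[\,\bigl|\sum_j a_j \exp\left(-i\mathbf{w}^{T}\Phi_m(x_{mj})\right)\bigr|^{2}\right]\geq 0$; it also verifies symmetry explicitly. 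You instead use the closure of Mercer kernels under precomposition with an arbitrary map, observing that the Gram matrix of $\tilde{k}_m$ is literally a Gram matrix of the ordinary squared-exponential kernel at the transformed points $\Phi_m(x_m^{(i)})$ -- this is more elementary (no spectral representation needed), handles symmetry automatically, and makes transparent why no property of $\Phi_m$ beyond being a fixed deterministic map is used. Your closing observation is a genuinely different and cleaner route the paper does not take: since $\prod_{m}\exp\bigl(-\tfrac{1}{2l^{2}}|\Phi_m(x_m)-\Phi_m(x_m')|^{2}\bigr)=\exp\bigl(-\tfrac{1}{2l^{2}}\|\boldsymbol{\Phi}(\mathbf{x})-\boldsymbol{\Phi}(\mathbf{x}')\|^{2}\bigr)$, the warped kernel is just the $d$-dimensional squared-exponential kernel composed with $\boldsymbol{\Phi}$, so one application of the composition property dispenses with both Bochner's theorem and the Schur product theorem. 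What the paper's route buys is self-containedness (it proves rather than cites the positive semi-definiteness of the base factor) and a template that extends verbatim to any stationary base kernel via its spectral measure; what yours buys is brevity and minimal machinery. The only cosmetic point worth stating explicitly if you write this up is the symmetry of $\tilde{k}$ (immediate in your framework) and the remark you already make that the constant $(\sigma_0^{2})^{d}$ is harmless.
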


\begin{proof}
Any kernel, is said to be valid Mercer kernel if it holds two properties.
(i) The kernel function is symmetric and (ii) The kernel is positive
semi-definite. Since warped kernel, $\tilde{k}({\bf x},{\bf x}^{'})$
use the distance $\mid\Phi_{m}(x_{m})-\Phi_{m}(x_{m}^{'})\mid^{2}$
between points $x_{m}$ and $x_{m}^{'}$ along each dimension $m$,
it is invariant to the swapping between $x_{m}$ and $x_{m}^{'}$,
i.e. $\mid\Phi_{m}(x_{m})-\Phi_{m}(x_{m}^{'})\mid^{2}$ is same as
$\mid\Phi_{m}(x_{m}^{'})-\Phi_{m}(x_{m})\mid^{2}$. Thus $\tilde{k}({\bf x},{\bf x}^{'})$
is symmetric ($\tilde{k}({\bf x},{\bf x}^{'})=\tilde{k}({\bf x}^{'},{\bf x})$).
Then the function $\tilde{k}({\bf x},{\bf x}^{'})$ is positive semi-definite
if its associated quadratic form is non-negative, i.e. for all points
${\bf x}_{1},{\bf x}_{2},\ldots,{\bf x}_{n}\epsilon\mathcal{X}$ and
real numbers $a_{1},a_{2},\ldots,a_{n}\epsilon\mathbb{R}$
\[
\sum_{j=1}^{n}\sum_{k=1}^{n}a_{j}a_{k}\tilde{k}({\bf x}_{j},{\bf x}_{k})\geq0
\]
First let us consider $\tilde{k}$ along the dimension $m$. Since
$\tilde{k}_{m}(x_{m},x_{m}^{'})$ can be seen as a function of $\mid\Phi_{m}(x_{m})-\Phi_{m}(x_{m}^{'})\mid$,
it is a stationary function. This allow us to apply Bochner's theorem
to $\tilde{k}_{m}(x_{m},x_{m}^{'})$, i.e. $\tilde{k}_{m}(x_{m},x_{m}^{'})=\mathbb{E}\left[\exp\left(-i\mathbf{w}^{T}(\Phi_{m}(x_{m})-\Phi_{m}(x_{m}^{'}))\right)\right]$
where $\mathbf{w}$ is a random variable of length $n$. Then for
all the points we can write
\begin{align*}
\sum_{j=1}^{n}\sum_{k=1}^{n}a_{j}a_{k}\tilde{k}_{m}(x_{mj},x_{mk}) & =\sum_{j=1}^{n}\sum_{k=1}^{n}a_{j}a_{k}\mathbb{E}\left[\exp\left(-i\mathbf{w}^{T}(\Phi_{m}(x_{mj})-\Phi_{m}(x_{mk}))\right)\right]\\
 & \hspace{-1cm}=\mathbb{E}\left[\sum_{j=1}^{n}\sum_{k=1}^{n}a_{j}\exp\left(-i\mathbf{w}^{T}\Phi_{m}(x_{mj})\right)a_{k}\exp\left(i\mathbf{w}^{T}\Phi_{m}(x_{mk})\right)\right]\\
 & \hspace{-1cm}=\mathbb{E}\left[\mid\sum_{j=1}^{n}a_{j}\exp\left(-i\mathbf{w}^{T}\Phi_{m}(x_{mj})\right)\mid^{2}\right]\geq0
\end{align*}
That means, $\tilde{k}_{m}(x_{m},x_{m}^{'})$ is a positive semi-definite
matrix. Then from Schur product theorem \citep{Schur_11Bemerkungen},
we know that the element-wise product of any positive semi-definite matrices
is also a positive semi-definite matrix, i.e. $\tilde{k}({\bf x},{\bf x}^{'})=\prod_{m=1}^{d}\tilde{k}_{m}(x_{m},x_{m}^{'})$
is also positive semi-definite and thus it is a valid Mercer kernel. 
\end{proof}
The predictive mean and variance according to the new warped kernel,
$\tilde{k}({\bf x},{\bf x}^{'})$ can be respectively written as $\mu_{n}(\tilde{\mathbf{x}})=\mathbf{\tilde{k}^{T}\left[\tilde{K}+\sigma^{2}\mathbf{I}\right]^{-1}}f_{1:n}$
and $\sigma_{n}^{2}(\tilde{\mathbf{x}})=\tilde{k}(\tilde{\mathbf{x}},\tilde{\mathbf{x}})-\mathbf{\tilde{k}^{T}\left[\tilde{K}+\sigma^{2}\mathbf{I}\right]^{-1}\tilde{k}}$
where $\mathbf{\tilde{k}}=\left[\begin{array}{cccc}
\tilde{k}(\tilde{\mathbf{x}},{\bf x}_{1}) & \tilde{k}(\tilde{\mathbf{x}},{\bf x}_{2}) & \ldots & \tilde{k}(\tilde{\mathbf{x}},{\bf x}_{n})\end{array}\right]$ and $\tilde{{\bf K}}(i,i^{'})=\tilde{k}({\bf x}_{i},{\bf x}_{i^{'}})$.
From \textbf{Lemma \ref{lem:Lemma2},} we proved that warped kernel,
$\tilde{k}({\bf x},{\bf x}^{'})$ is a valid Mercer kernel and thus
the function values $f({\bf x}_{1}),\ldots,f({\bf x}_{n})$ follow
a multivariate Gaussian distribution $\mathcal{N}(0,\tilde{{\bf K}})$.
In the following we provide a discussion about the convergence guarantee
of Bayesian optimisation after the incorporation of prior knowledge.

\subsubsection{Discussion about Convergence Guarantee }

\begin{figure}
\centering{}\subfloat[\label{fig:Concentrated prior - Chapter 5}]{\centering{}\includegraphics[width=0.5\textwidth]{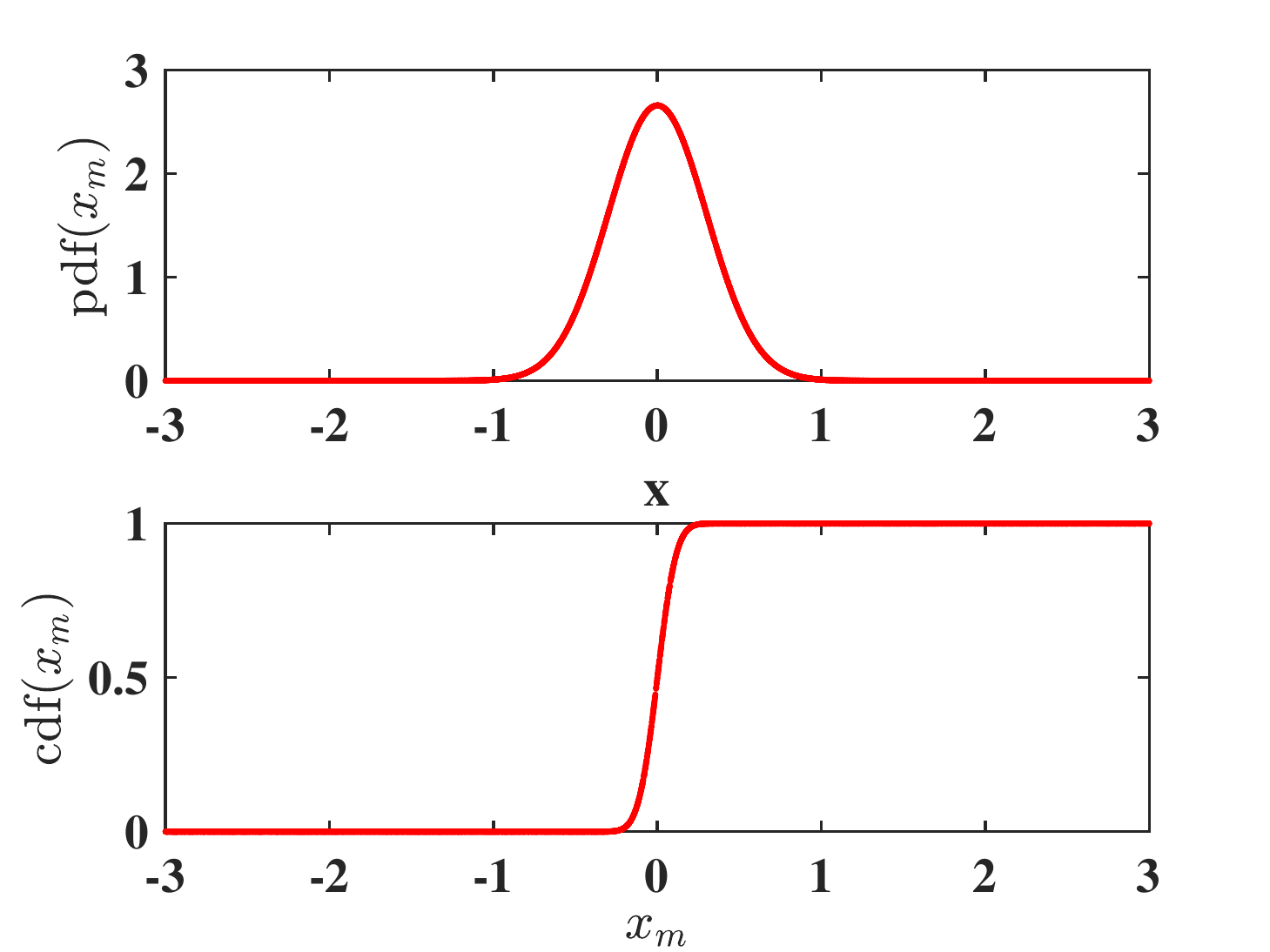}}\subfloat[\label{fig:normal prior - Chapter 5}]{\centering{}\includegraphics[width=0.5\textwidth]{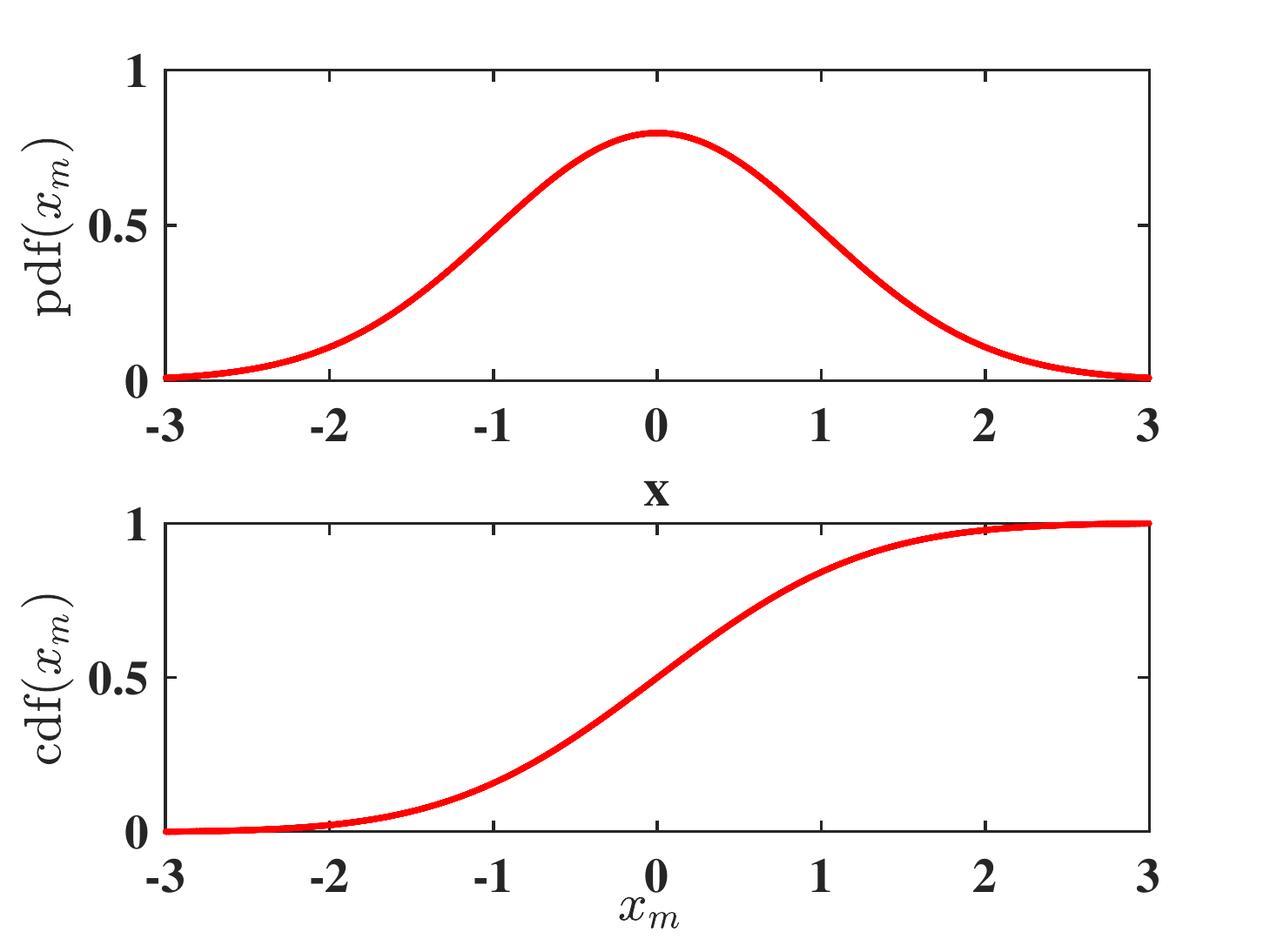}}\caption{(a) When concentrated prior is used for warping the search space.
(b) When widely distributed prior is used for warping the search space.\label{fig:normal and concentrated prior- Chapter 5}}
\end{figure}
To understand the convergence behavior of our proposed method, let
us consider two prior distributions as shown in Figure \ref{fig:normal and concentrated prior- Chapter 5}.
The two priors mainly differ in that the prior in Figure \ref{fig:Concentrated prior - Chapter 5}
has some regions ($\left[-3,1\right]\cup\left[1,3\right]$) with zero
likelihood of optimum whereas the prior in Figure \ref{fig:normal prior - Chapter 5}
has nonzero likelihood of optimum at all points. For the first prior,
any point in the region $[-3,-1]$ gets assigned to point $0$ in
cdf space. Similarly, any point in the region $[1,3]$ gets assigned
to point $1$ in cdf space, i.e. the region $\left[-3,1\right]\cup\left[1,3\right]$
has zero likelihood for being the optima. Thus due to a misspecified
(or misleading) prior, if $x_{*}$ happens to be in the zero likelihood
region, the optimiser will miss it. But, for the second prior in Figure
\ref{fig:normal prior - Chapter 5}, which has nonzero likelihood
of optimum at all points, even if the prior is misspecified, convergence
to $x_{*}$ is always guaranteed as there is always a small probability
of it being the optimum. Since in our algorithm we only use a prior
of second type, our algorithm is guaranteed to converge as it borrows
the convergence behaviour of the traditional Bayesian optimisation.

The proposed Bayesian optimisation method is explained in \textbf{Algorithm
\ref{alg:Proposed Bayesian Optimisation Algorithm}}.
\begin{algorithm}
\begin{enumerate}
\item \textbf{Input: }Initial observations $\mathcal{D}_{n}=\left(\mathbf{x}_{i},\boldsymbol{y}_{i}\right)_{i=1}^{n_{0}}$
\item \textbf{Output: }$\left\{ {\bf x}_{n},y_{n}\right\} _{n=1}^{N}$
\item \textbf{for $n=n_{0},\ldots,N$ do}
\begin{enumerate}
\item Compute the kernel matrix $\tilde{{\bf K}}$ using all $\Phi(\mathbf{x}_{i})'$s.
\item Find ${\bf x}_{n}$ by optimising the acquisition function using
either eq. (\ref{eq:EI}) or eq. (\ref{eq:UCB}).
\item Find the objective function: $y_{n}=f({\bf x}_{n})+\epsilon_{n}$.
\item Augment to the data $\mathcal{D}_{1:n}=\mathcal{D}_{1:n-1}\cup\left({\bf x}_{n},y_{n}\right)$
and update $\tilde{{\bf K}}$.
\end{enumerate}
\item \textbf{end for}
\end{enumerate}
\caption{Bayesian Optimisation Algorithm using warped kernel.\label{alg:Proposed Bayesian Optimisation Algorithm}}
\end{algorithm}

\subsubsection{Some Example Prior Distributions and the Corresponding Warping Functions\label{subsec:prior distributions}}

We assume expert has some knowledge about the optima, which can be
used to construct a prior distribution $g(\mathbf{x}_{*})$. In the
following we provide few transformations based on some standard parametric
distributions that may be relevant to practical applications.

\textbf{1. Normal distribution}: Most common continuous distribution
used for prior assumption is normal distribution as a domain expert
may often be able to guess a good region for function values. Here
we consider truncated normal distribution since the search space in
Bayesian optimisation is defined as a compact space. The cdf of truncated
normal distribution with pdf, $f(\mathbf{x}_{*}|\mu,\sigma^{2})=\frac{1}{\sqrt{2\pi\sigma^{2}}}\exp\left(-\frac{(\mathbf{x}_{*}-\mu)^{2}}{2\sigma^{2}}\right)$
assuming $a<\mathbf{x}_{*}<b$ is given by 
\begin{equation}
\Phi(\mathbf{x}_{*}|\mu,\sigma^{2},a,b)=\frac{\tilde{\Phi}(\mathbf{x}_{*}|\mu,\sigma^{2})-\tilde{\Phi}(a|\mu,\sigma^{2})}{\tilde{\Phi}(b|\mu,\sigma^{2})-\tilde{\Phi}(a|\mu,\sigma^{2})}\label{eq:cdf of truncated normal}
\end{equation}
where $\tilde{\Phi}(\mathbf{x}_{*}|\mu,\sigma^{2})=\frac{1}{2}\left[1+\textrm{erf}\left(\frac{\mathbf{x}_{*}-\mu}{\sigma\sqrt{2}}\right)\right]$.

\textbf{2. Truncated gamma distribution: }In other experiments such
as hyperparameter tuning of machine learning models, the value of
many hyperparameters are constrained to be non-negative. In such cases,
an appropriate continuous distribution for the prior is gamma distribution.
Since, the gamma distribution is defined over $\mathbf{x}_{*}\epsilon\left(0,\infty\right)$,
while we need to supply the prior on a compact space, we consider
a truncated version of gamma distribution defined over $\mathbf{x}_{*}\epsilon\left(a,b\right)$.
The closed form expression for the cdf of the truncated gamma distribution
is as follows. Let the pdf of truncated gamma distribution be defined
by $f(\mathbf{x}_{*};\alpha,\beta)=C\mathbf{x}_{*}^{\alpha-1}e^{-\beta\mathbf{x}_{*}}$
where $\mathbf{x}_{*}\epsilon\left(a,b\right)$. By simple calculation,
$C$ can be estimated to be $C=\frac{\beta^{\alpha}}{\left(\gamma\left(\alpha,\beta b\right)-\gamma\left(\alpha,\beta a\right)\right)}$
where $\gamma\left(\alpha,\beta b\right)$ and $\gamma\left(\alpha,\beta a\right)$
are the lower incomplete gamma functions. Now for the truncated gamma
distribution with pdf, $f(\mathbf{x}_{*};\alpha,\beta)=C\mathbf{x}_{*}^{\alpha-1}e^{-\beta{\bf \mathbf{x}_{*}}}$,
$\mathbf{x}_{*}\epsilon\left(a,\mathbf{x}_{*}'\right)$, its cdf can
be written as 

\begin{equation}
\Phi(\mathbf{x}_{*}';\alpha,\beta)=\frac{\gamma\left(\alpha,\beta\mathbf{x}_{*}'\right)-\gamma\left(\alpha,\beta a\right)}{\gamma\left(\alpha,\beta b\right)-\gamma\left(\alpha,\beta a\right)}\label{eq:cdf truncated gamma}
\end{equation}

\section{Experiments}

We experiment the proposed Bayesian optimisation method on some benchmark
functions as well as on various real data experiments. \textbf{Since
in our experiments the aim is to find the global minima of the function,
instead of $f(\mathbf{x})$ we passed $1-f(\mathbf{x})$ to the optimiser.}
The experiments on benchmark functions are designed to illustrate
the accelerated convergence of our method to reach the minimum of
the function due to prior knowledge. In real data experiments, we
tune the hyperparameters of two algorithms: Support Vector Machine
(SVM) and Random Forest. For these algorithms, we use the prior knowledge
gained by previous studies \citep{Van_Hutter_18Hyperparameter}. We
demonstrate the superiority of our method by comparing it with following
two baselines.

1. Standard BO - This baseline is the standard Bayesian optimisation
algorithm.

2. Prior based Search - In this baseline, function evaluation points
are randomly sampled from the prior (see section \ref{subsec:prior distributions}). 

\subsection{Experimental setting}

In all the experiments, square-exponential kernel is used for Gaussian
process modeling and maximum likelihood estimate is used for estimating
Gaussian process kernel length scale. All the results are averaged
over $10$ runs with random initial values. 

\subsection{Benchmark functions}

\subsubsection{Gaussian function:}

We generated a $3$-dimensional Gaussian function with the following
form 
\[
f(\mathbf{x})=1-\text{exp}\left(-\frac{1}{2}(\mathbf{x}-\boldsymbol{\mu})(\mathbf{x}-\boldsymbol{\mu})^{T}\right)
\]
where
\begin{figure*}
\begin{centering}
\subfloat[$5\%$ off]{\centering{}\includegraphics[width=0.5\textwidth]{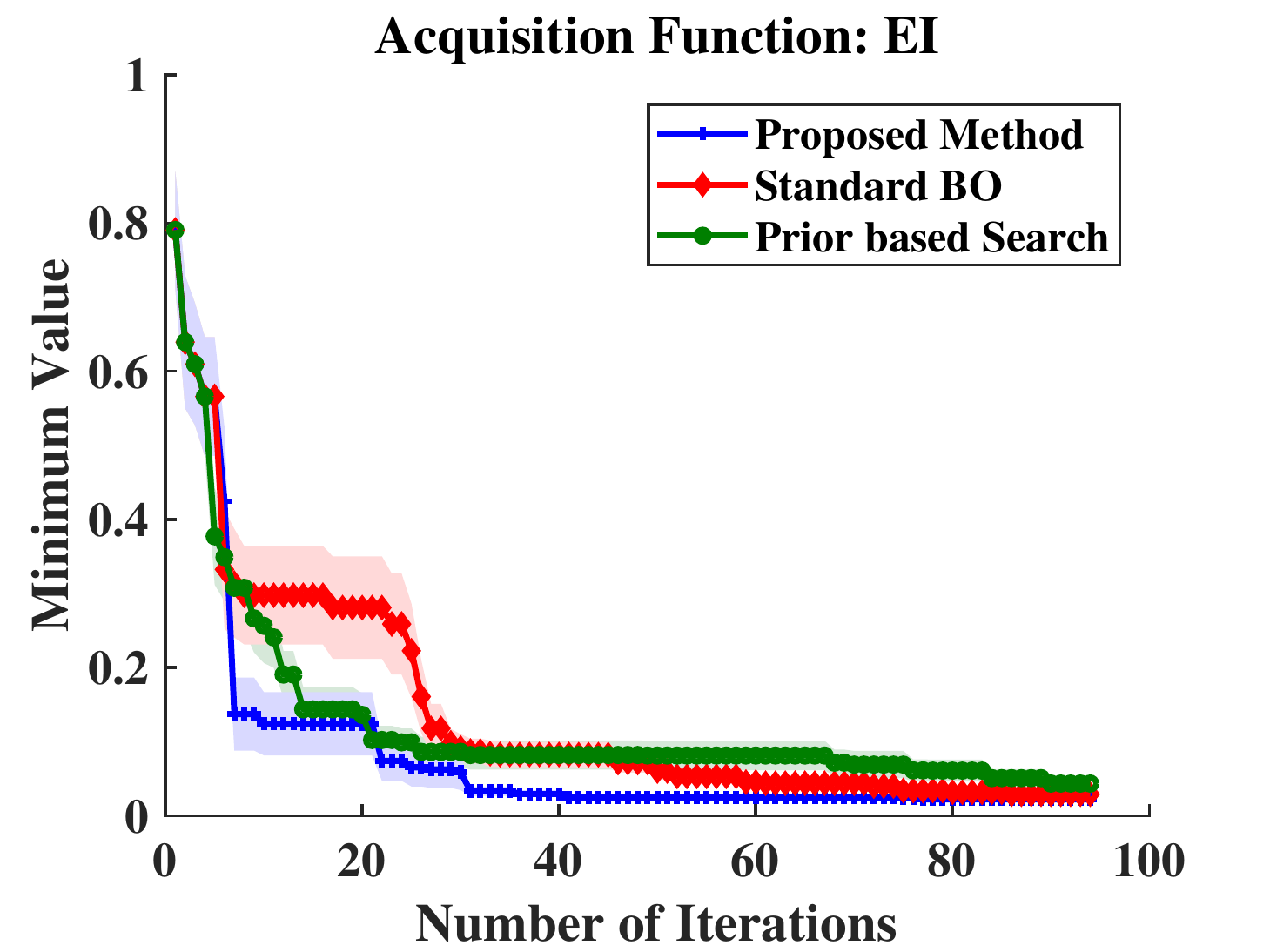}}\subfloat[$5\%$ off]{\centering{}\includegraphics[width=0.5\textwidth]{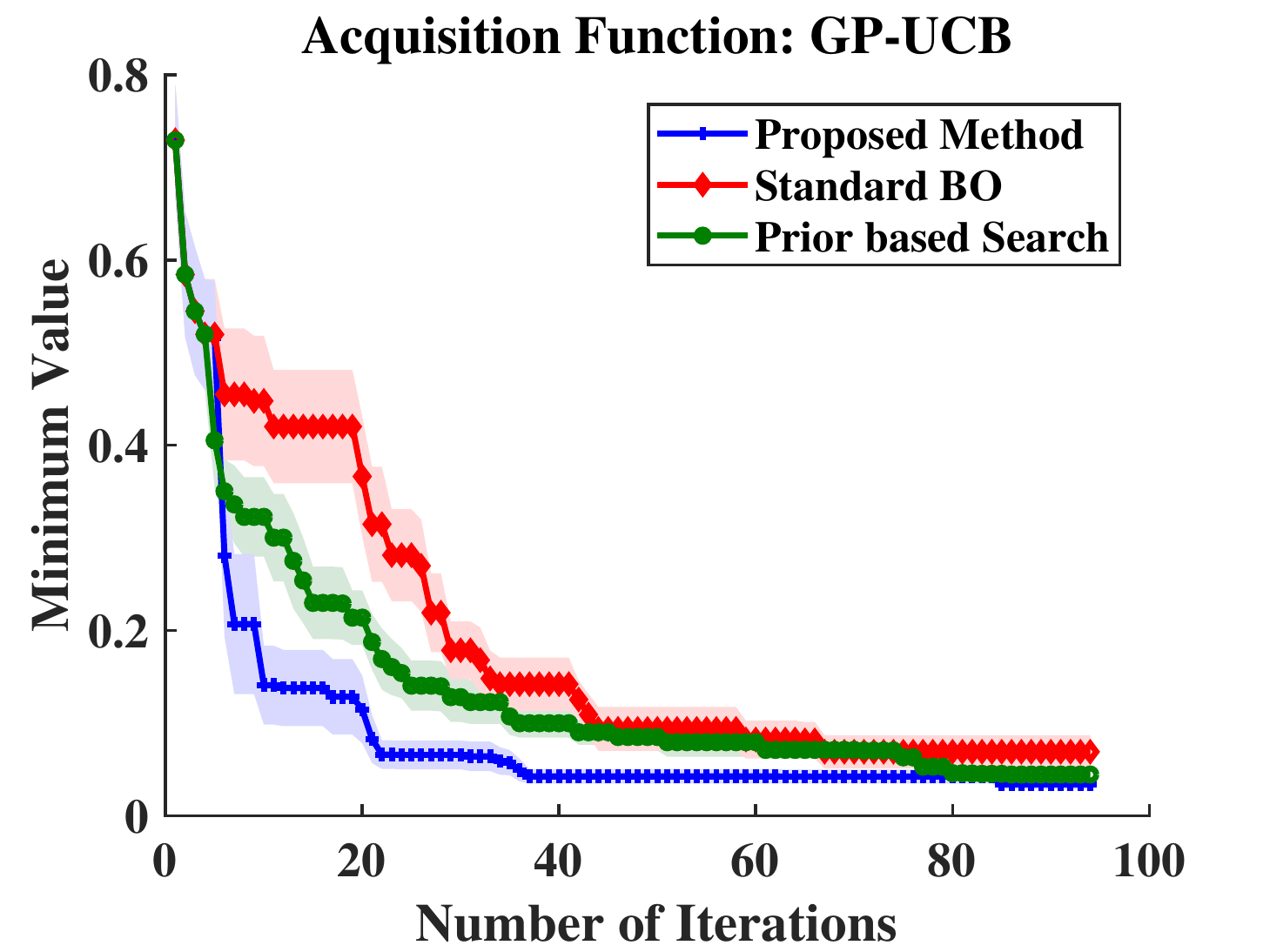}}
\par\end{centering}
\begin{centering}
\subfloat[$10\%$ off]{\centering{}\includegraphics[width=0.5\textwidth]{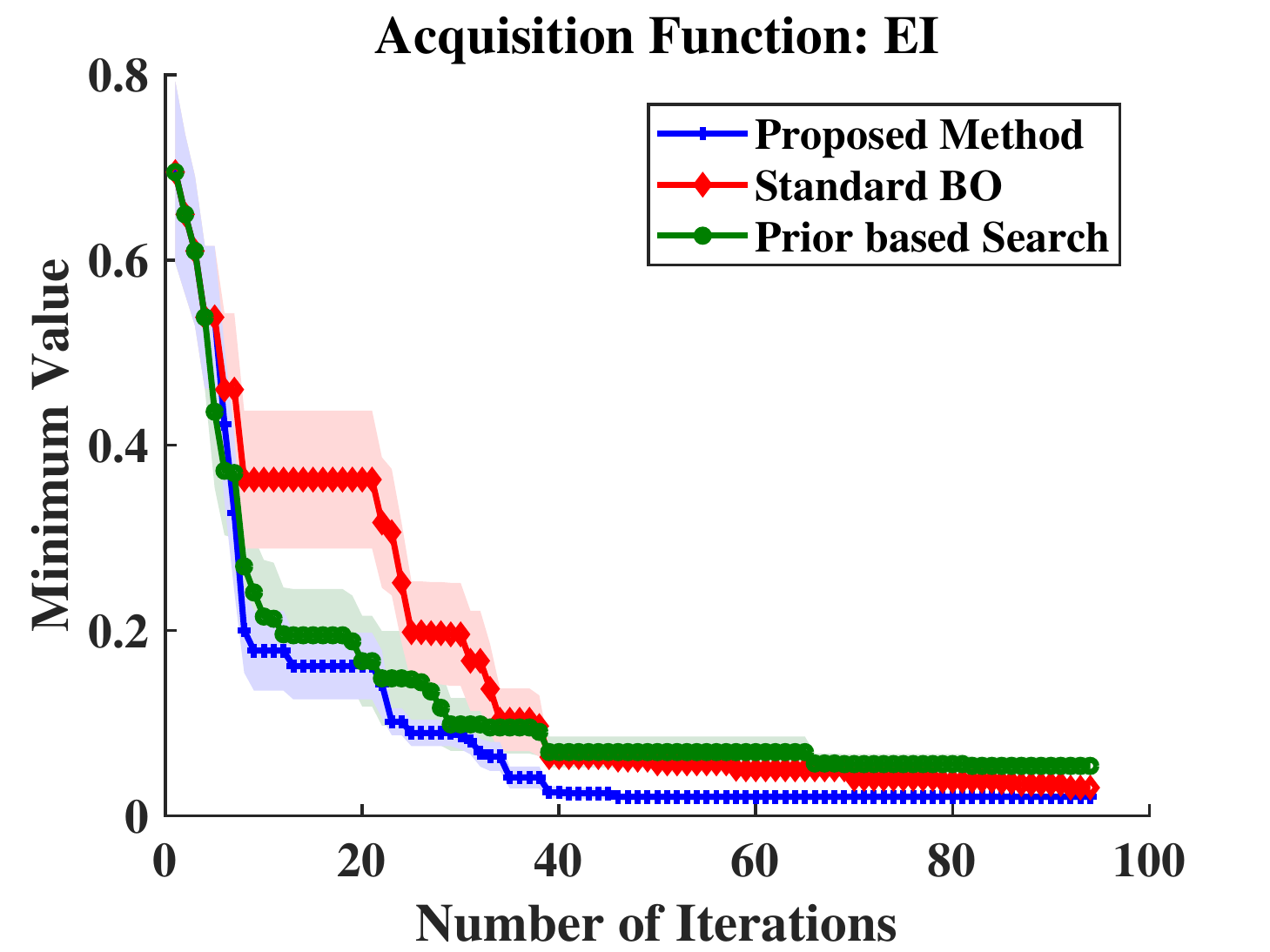}}\subfloat[$10\%$ off]{\centering{}\includegraphics[width=0.5\textwidth]{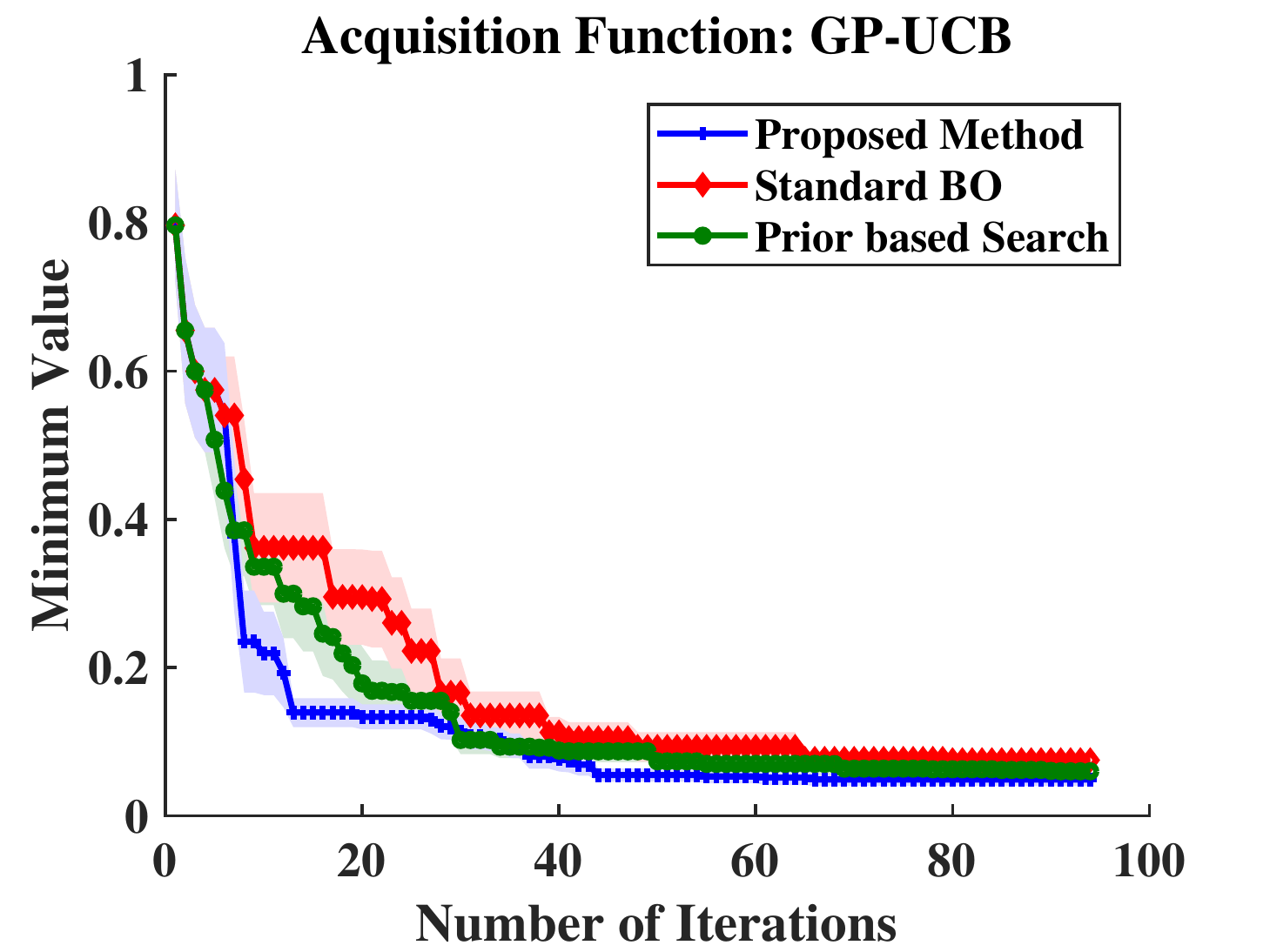}}
\par\end{centering}
\begin{centering}
\subfloat[$20\%$ off]{\centering{}\includegraphics[width=0.5\textwidth]{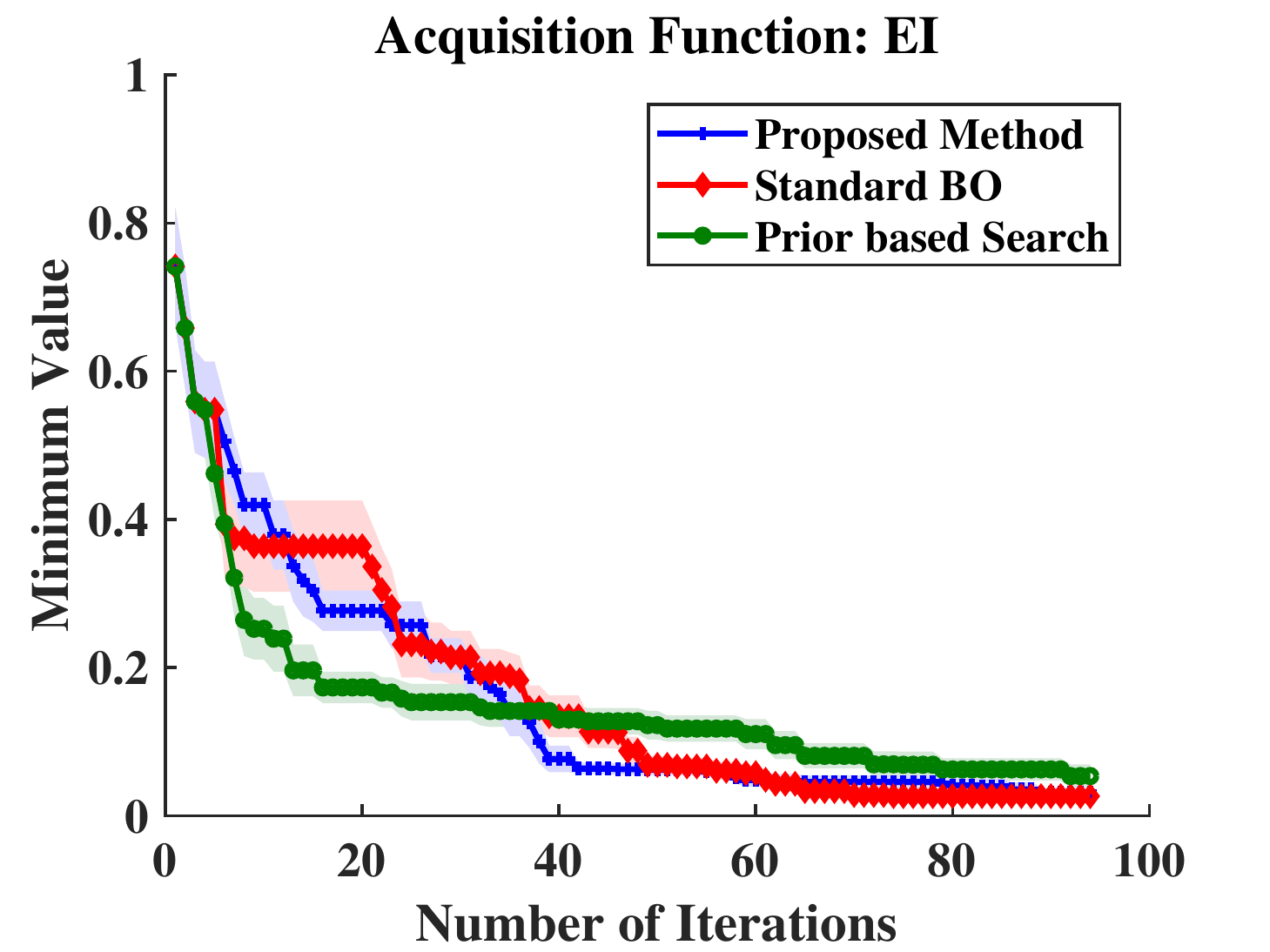}}\subfloat[$20\%$ off]{\centering{}\includegraphics[width=0.5\textwidth]{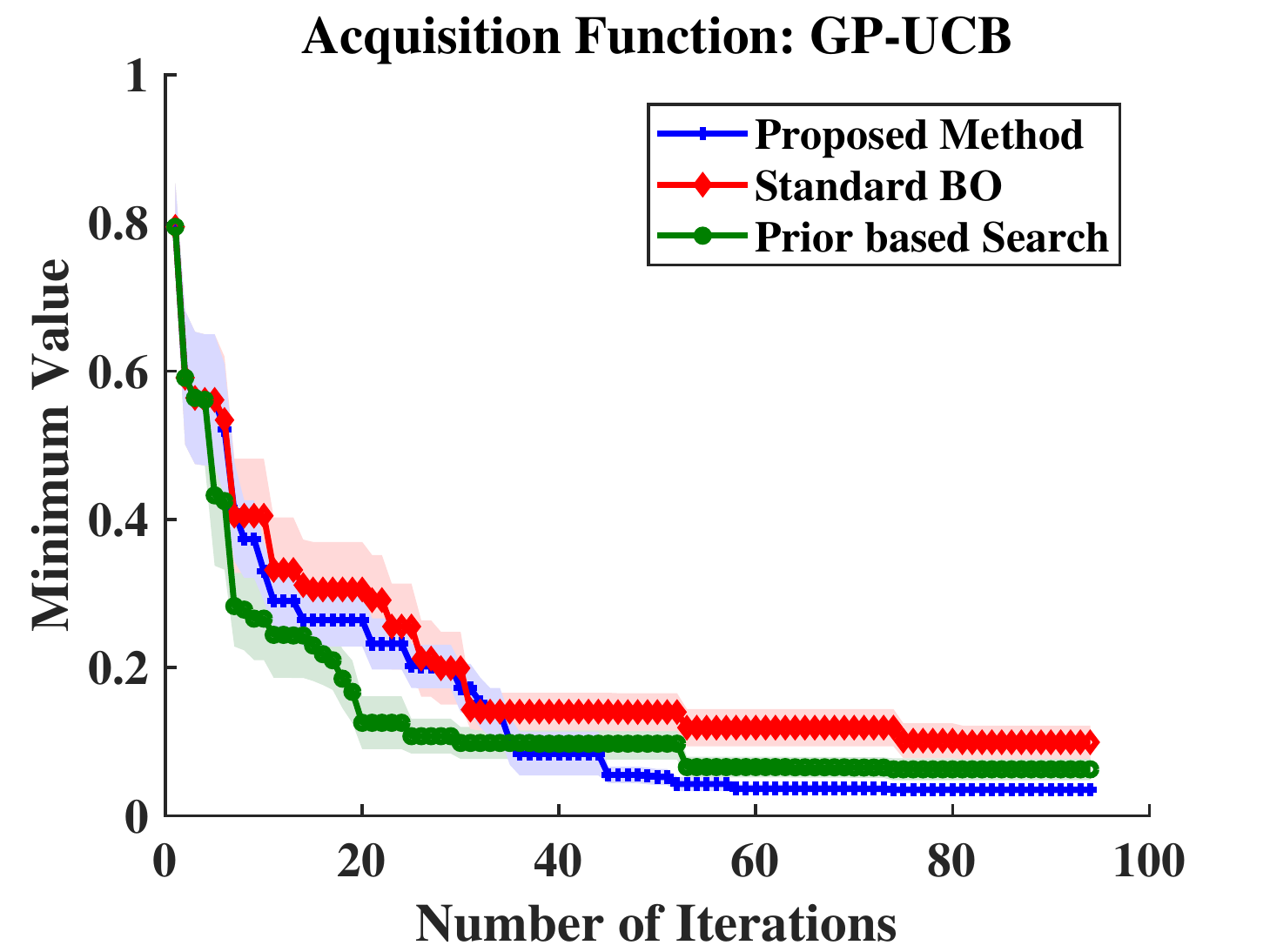}}
\par\end{centering}
\caption{Gaussian function: Minimum value reached vs optimisation iterations
(using both Expected improvement and GP-UCB). (a) \& (b) Prior mean
is $5\%$ off from the true minimum $(0.2)$. (c) \& (d) Prior mean
is $10\%$ off from the true minimum. (e) \& (f) Prior mean is $20\%$
off from the true minimum.\label{fig:Gaussian-function - Chapter 5}}
\end{figure*}
\begin{figure*}
\begin{centering}
\subfloat[$5\%$ off]{\centering{}\includegraphics[width=0.5\textwidth]{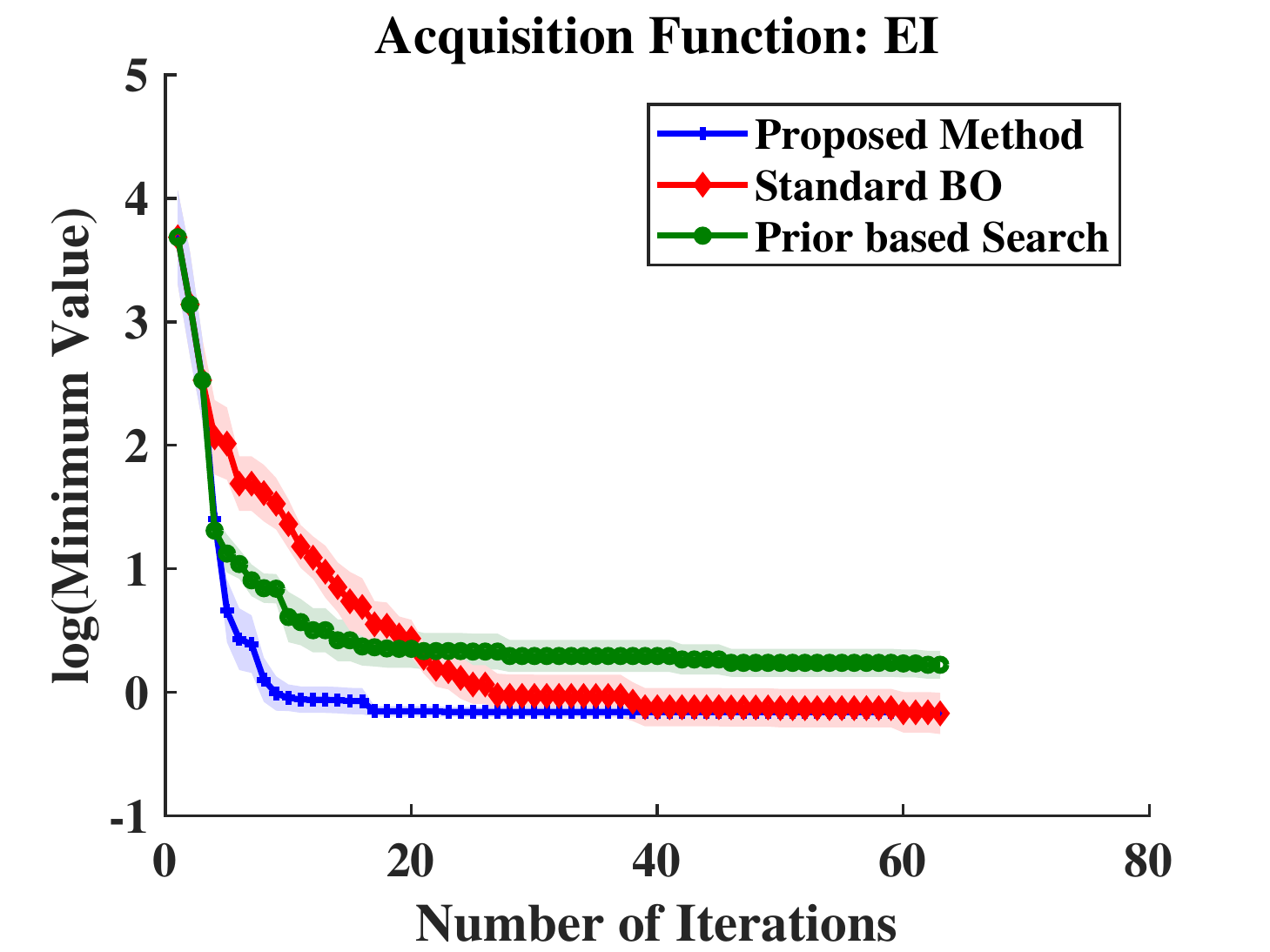}}\subfloat[$5\%$ off]{\centering{}\includegraphics[width=0.5\textwidth]{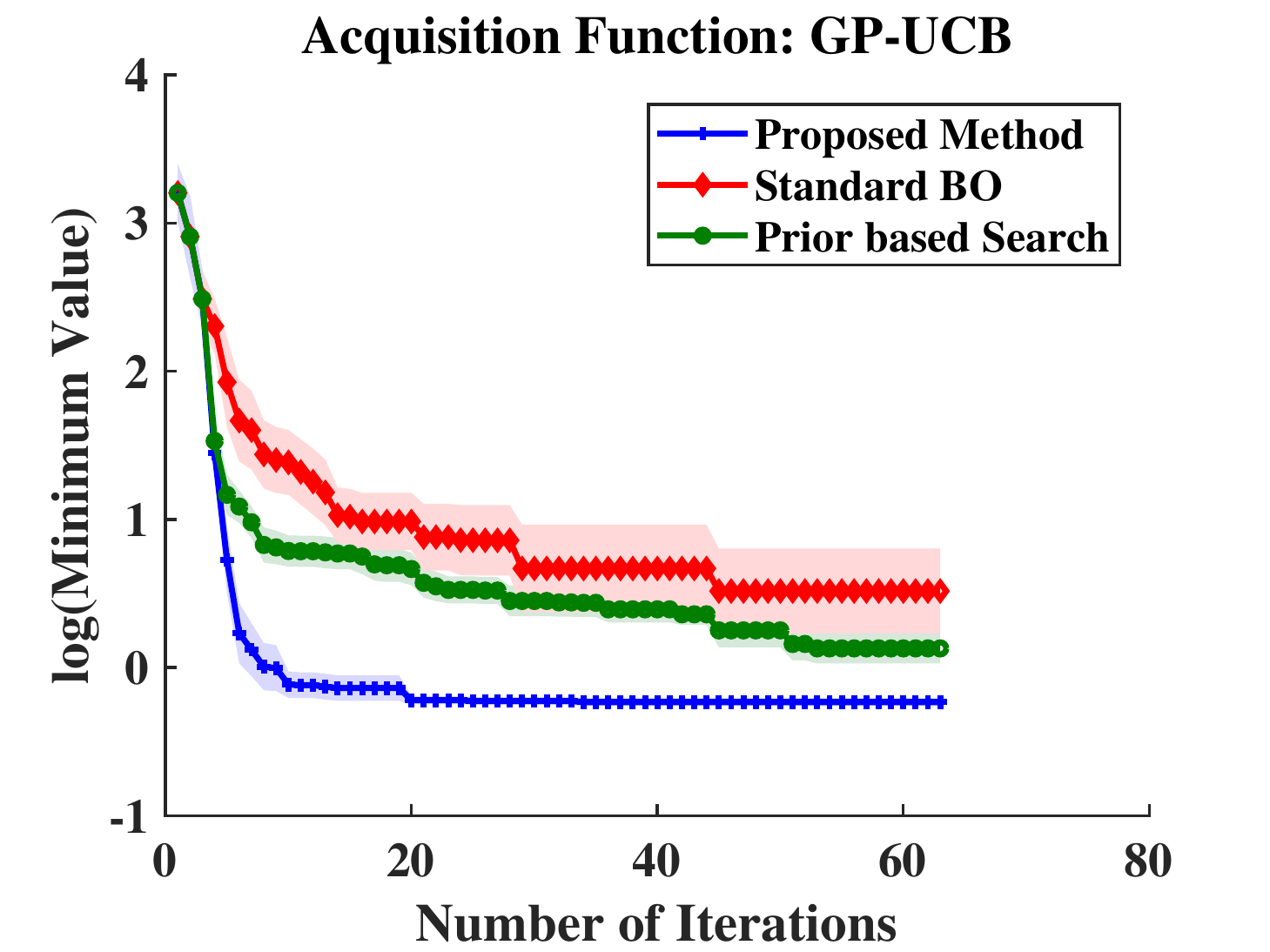}}
\par\end{centering}
\begin{centering}
\subfloat[$10\%$ off]{\centering{}\includegraphics[width=0.5\textwidth]{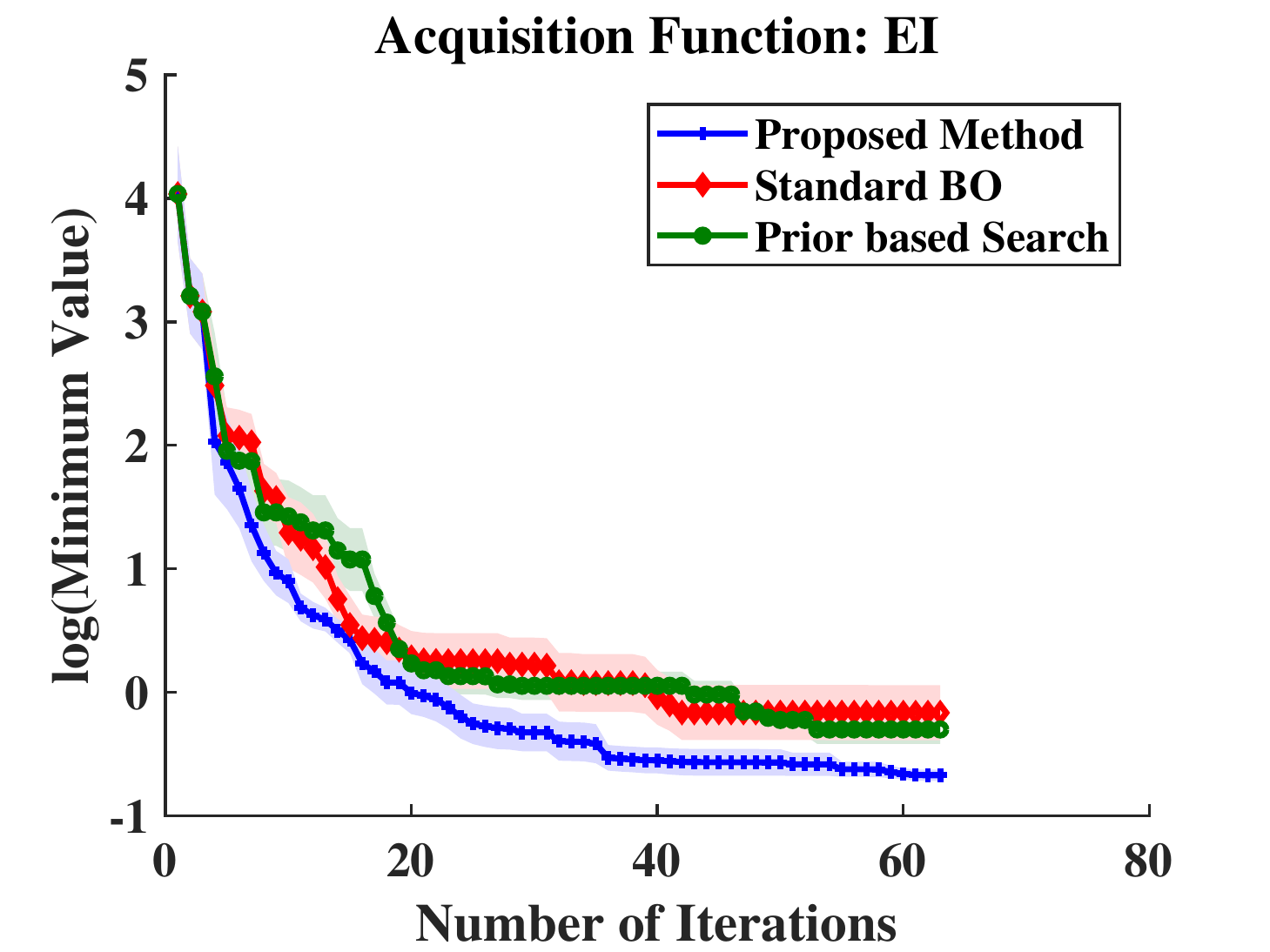}}\subfloat[$10\%$ off]{\centering{}\includegraphics[width=0.5\textwidth]{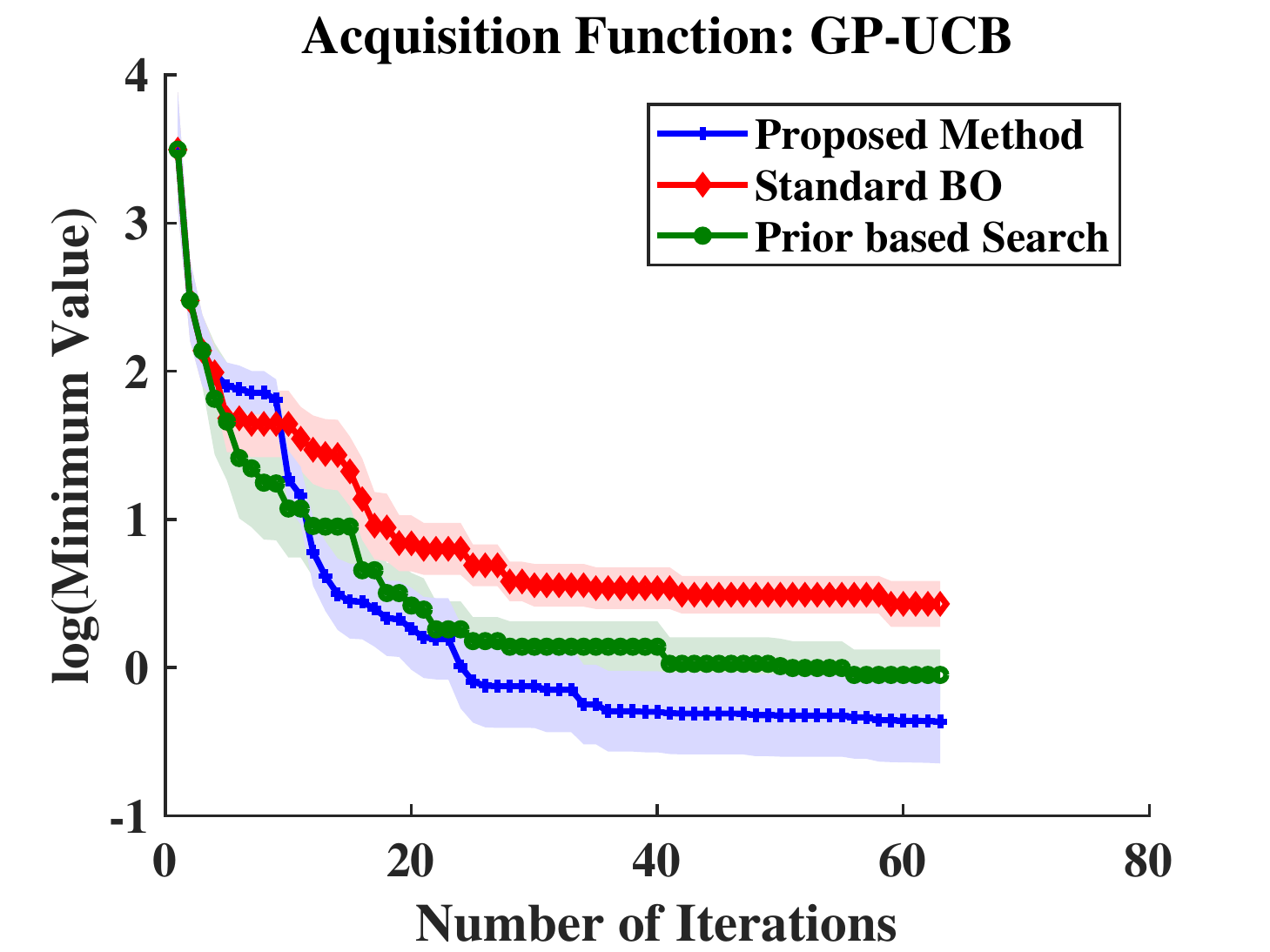}}
\par\end{centering}
\begin{centering}
\subfloat[$20\%$ off]{\centering{}\includegraphics[width=0.5\textwidth]{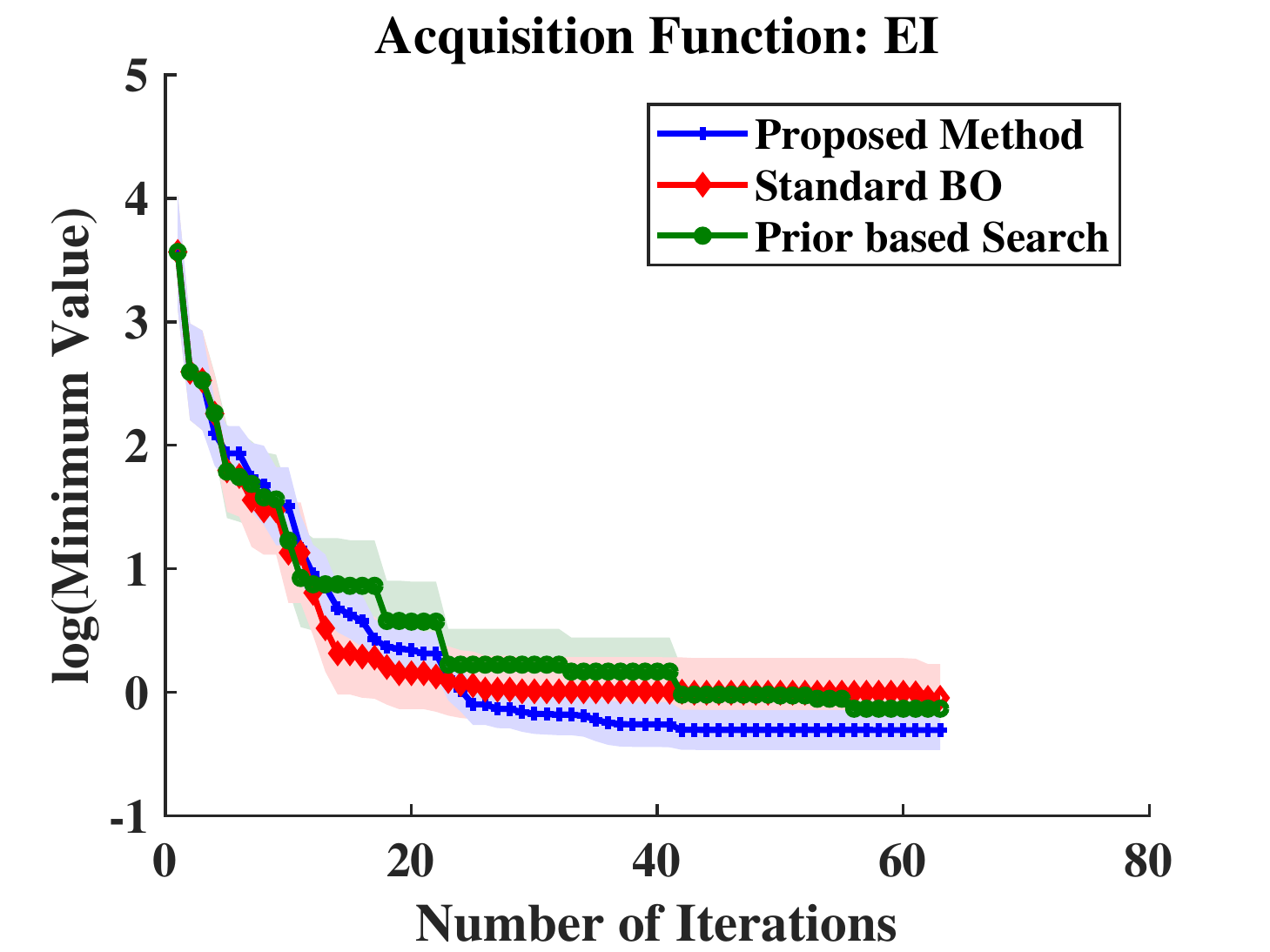}}\subfloat[$20\%$ off]{\centering{}\includegraphics[width=0.5\textwidth]{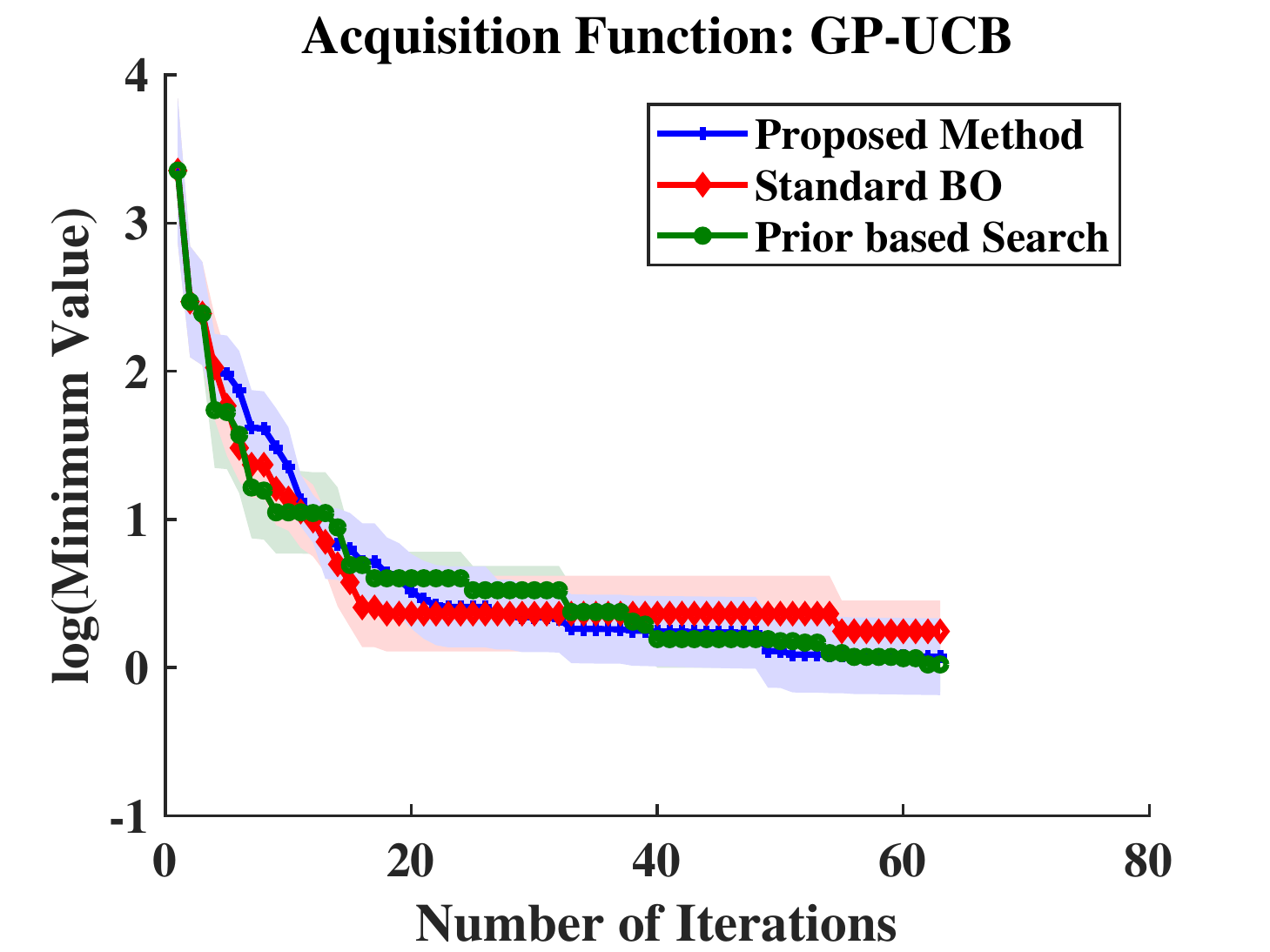}}
\par\end{centering}
\caption{Branin function: Minimum value reached vs optimisation iterations
(using both Expected improvement and GP-UCB). (a) \& (b) Prior mean
is $5\%$ off from the true minimum. (c) \& (d) Prior mean is $10\%$
off from the true minimum. (e) \& (f) Prior mean is $20\%$ off from
the true minimum. \label{fig:Branin-function - Chapter 5 }}
\end{figure*}
 $\boldsymbol{\mu}=\left[0.2,0.2,0.2\right]$. Here truncated normal
distribution is selected as prior distribution. To demonstrate how
optimisation performance changes with prior mean, we shift the mean
of the prior by different amounts from the true minimum $(0.2)$.
In particular, we shift the prior mean from the true function minimum
location by $5\%$, $10\%$ and $20\%$ of the search space. The standard
deviation is set to $1$ for all these cases to cover the full search
space. Figure \ref{fig:Gaussian-function - Chapter 5} shows the minimum
value obtained with respect to iterations for each selected prior
(using both Expected Improvement and GP-UCB acquisition functions).
Each method starts with the same four random observations from $\left[-2,2\right]$
along each dimension. The performance of proposed Bayesian optimisation
method clearly outperforms the standard Bayesian optimisation method
when the prior mean is close $(5\%/10\%\,\textrm{off})$ to the true
minimum. When the prior mean moves away from the true minimum ($20\%\,\textrm{off}$),
the performance of the proposed method starts to degrade and becomes
similar to standard Bayesian optimisation.

\subsubsection{Branin function:}

Next we try to find the minimum of Branin function defined on the
region $\mathbf{x}_{1}\epsilon\left(-5,10\right)$, $\mathbf{x}_{2}\epsilon\left(0,15\right)$.
Branin function can be mathematically written as 
\[
f(\mathbf{x})=a\left(\mathbf{x}_{2}-b\mathbf{x}_{1}^{2}+c\mathbf{x}_{1}-r\right)^{2}+s\left(1-t\right)\cos\left(\mathbf{x}_{1}\right)+s
\]
where $a=1,b=5.1/(4\pi^{2}),c=5/\pi,r=6,s=10,t=1/(8\pi)$. Here also
we selected truncated normal distribution as prior distribution. Once
again we analyze three cases: prior mean off by $5\%$, $10\%$ and
$20\%$ of the search space. For the first case, we simulate a case
with high confidence and thus set the prior standard deviation to
$0.25$. For the other two cases, we simulate medium confidence and
thus set the standard deviation to $4$. Minimum value obtained with
respect to iterations using both EI and GP-UCB acquisition functions
for all three prior means are shown in Figure \ref{fig:Branin-function - Chapter 5 }.
With good prior, our method is able to converge faster than the baselines.
When the prior mean is moving away from the true minimum, the performance
of the proposed method starts to degrade as expected.

\subsection{Hyperparameter Tuning}

The performance of various machine learning algorithms highly depend
on good values of their hyperparameters. In \citep{Van_Hutter_18Hyperparameter},
authors proposed a framework based on functional ANOVA \citep{Hoos_etal_14Anefficient}
to determine the most important hyperparameters of various algorithms
(including support vector machines and random forest) and deduce priors
over which values of these hyperparameters yield good performance.
The most important hyperparameters are inferred by experimenting functional
ANOVA in $100$ different datasets available on OpenML \citep{Vanschoren_etal_14Openml}.
We use these priors over hyperparameters in our proposed Bayesian
optimisation method. In our experiments, we tune the hyperparameters
of two algorithms: (1) Support Vector Machine (SVM) with two different
kernel types: (i) Radial Basis Function (RBF) and (ii) Sigmoid; (2)
Random forest.
\begin{table}
\centering{}%
\begin{tabular}{|c|c|c|}
\hline 
kernel/dataset &  `german numer' & `four class'\tabularnewline
\hline 
\hline 
RBF & $\alpha=2$, $\beta=0.5$ & $\alpha=2$, $\beta=0.5$\tabularnewline
\hline 
sigmoid & $\alpha=2$, $\beta=0.5$ & $\alpha=1$, $\beta=0.5$\tabularnewline
\hline 
\end{tabular}\caption{The values of shape parameter $\alpha$ and inverse scale parameter
$\beta$ of the prior truncated gamma distribution used when tuning
hyperparameters of SVM with RBF and sigmoid kernel on two different
binary classification datasets - `german numer' and `four class'.\label{tab:Parameter values of truncated gamma distribution} }
\end{table}
\begin{figure*}
\begin{centering}
\subfloat[\label{fig:Hyperparameter-tuning:exp1 - SVMRBF EI - Chapter 5}]{\centering{}\includegraphics[width=0.4\textwidth]{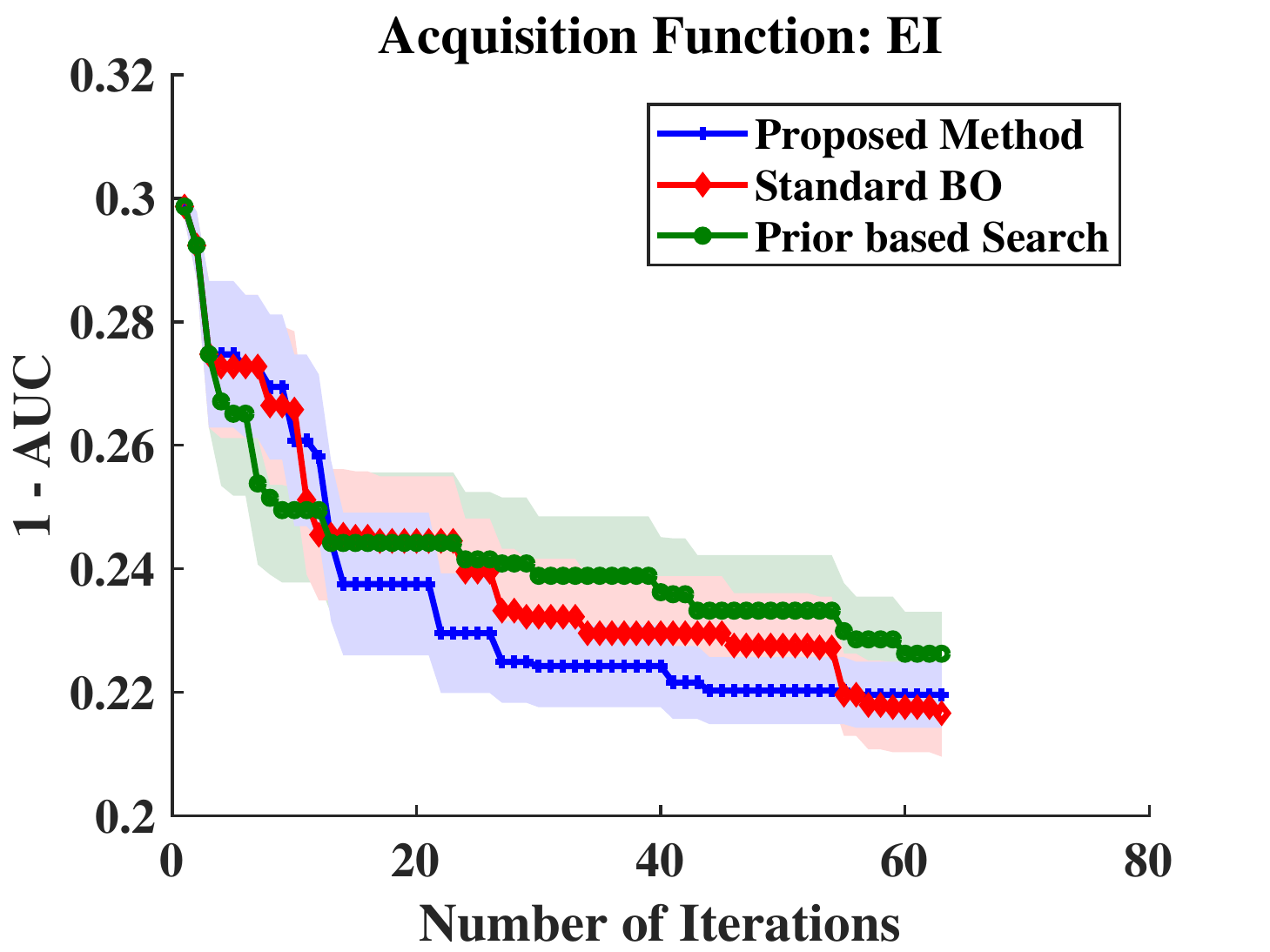}}\subfloat[\label{fig:Hyperparameter-tuning:exp1 - SVMRBF GP-UCB - Chapter 5}]{\centering{}\includegraphics[width=0.4\textwidth]{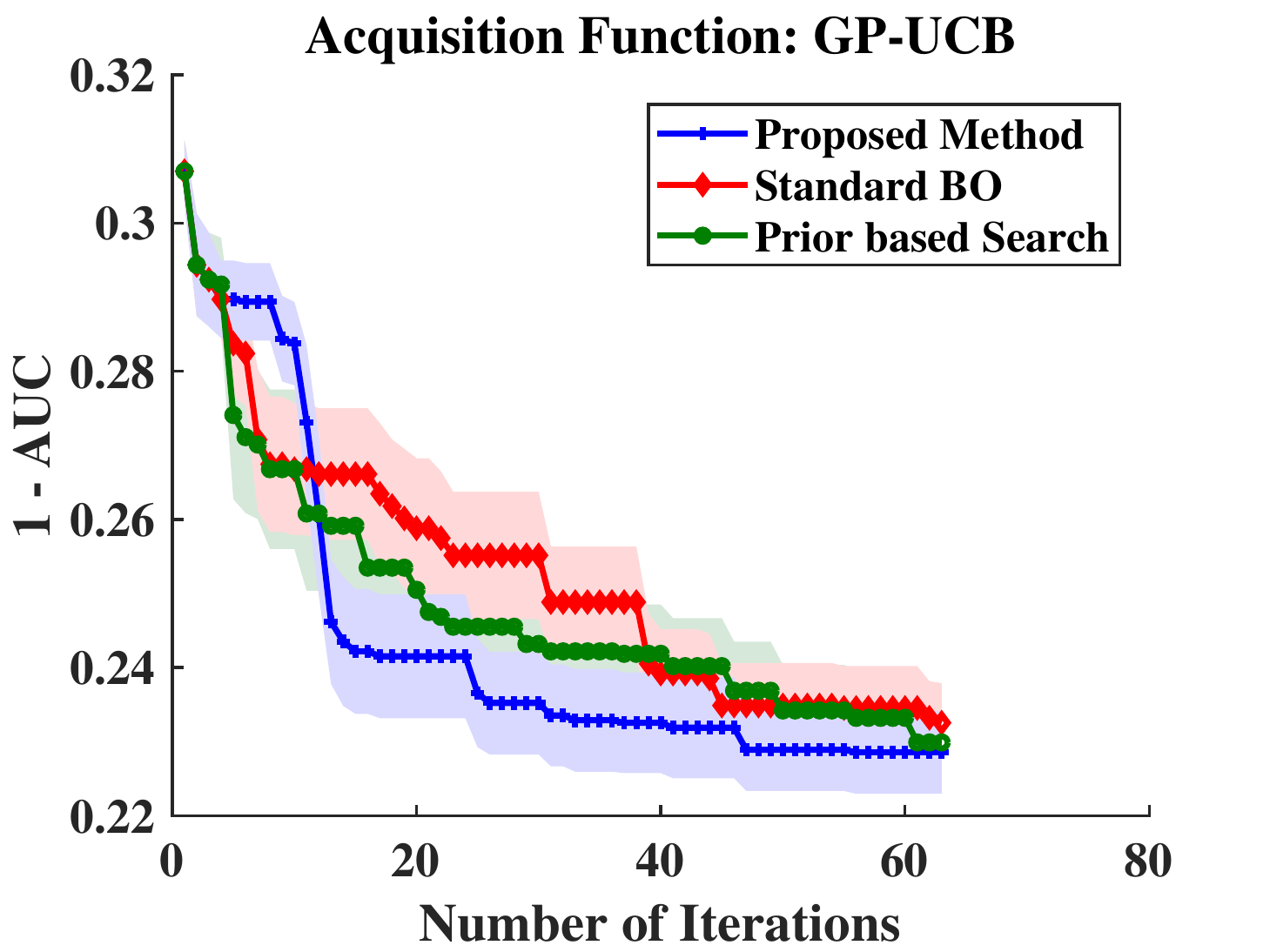}}
\par\end{centering}
\begin{centering}
\subfloat[\label{fig:Hyperparameter-tuning: exp2 SVMRBF EI - Chapter 5}]{\centering{}\includegraphics[width=0.4\textwidth]{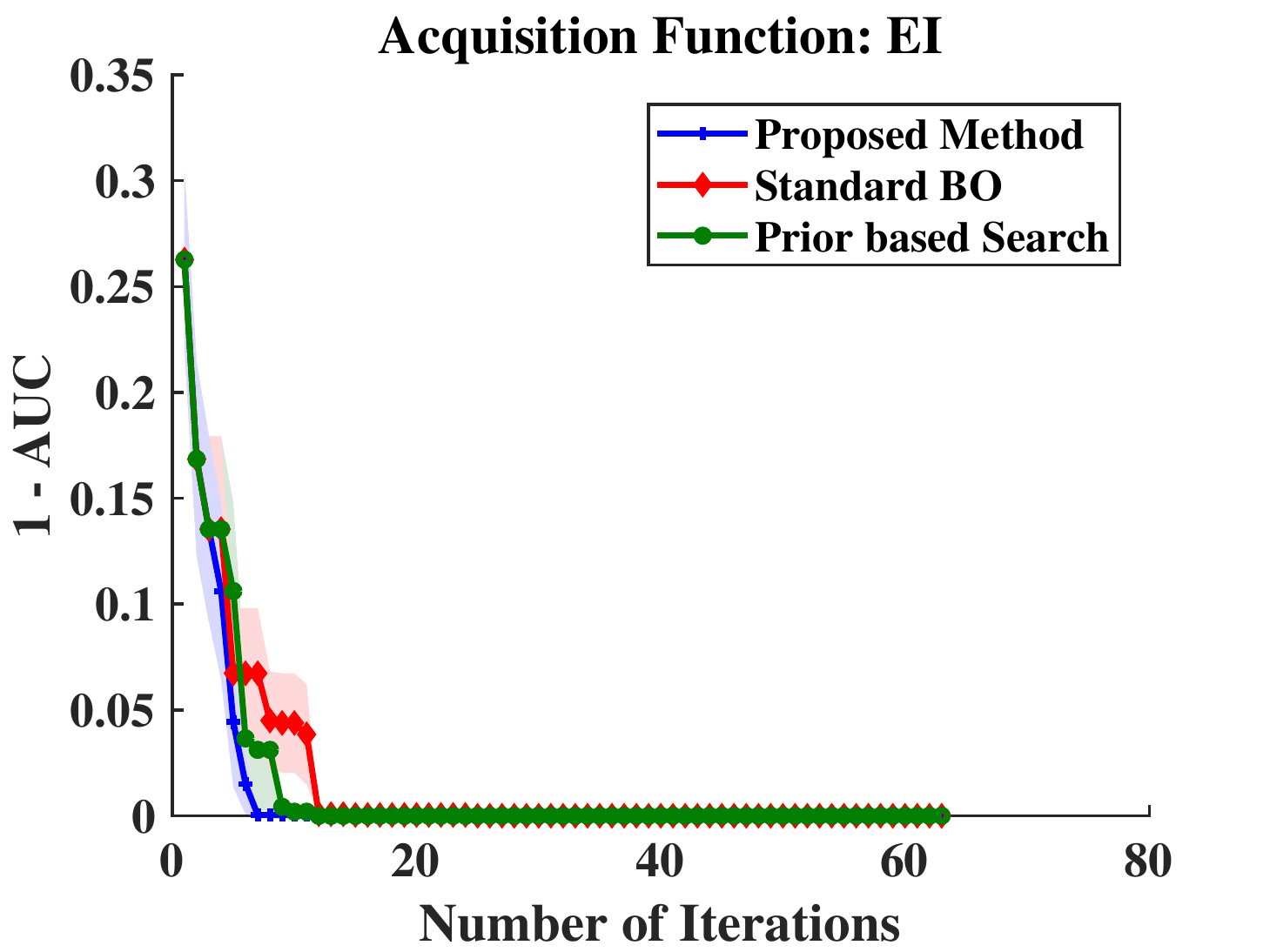}}\subfloat[\label{fig:Hyperparameter-tuning: exp2 SVMRBF GP-UCB - Chapter 5}]{\centering{}\includegraphics[width=0.4\textwidth]{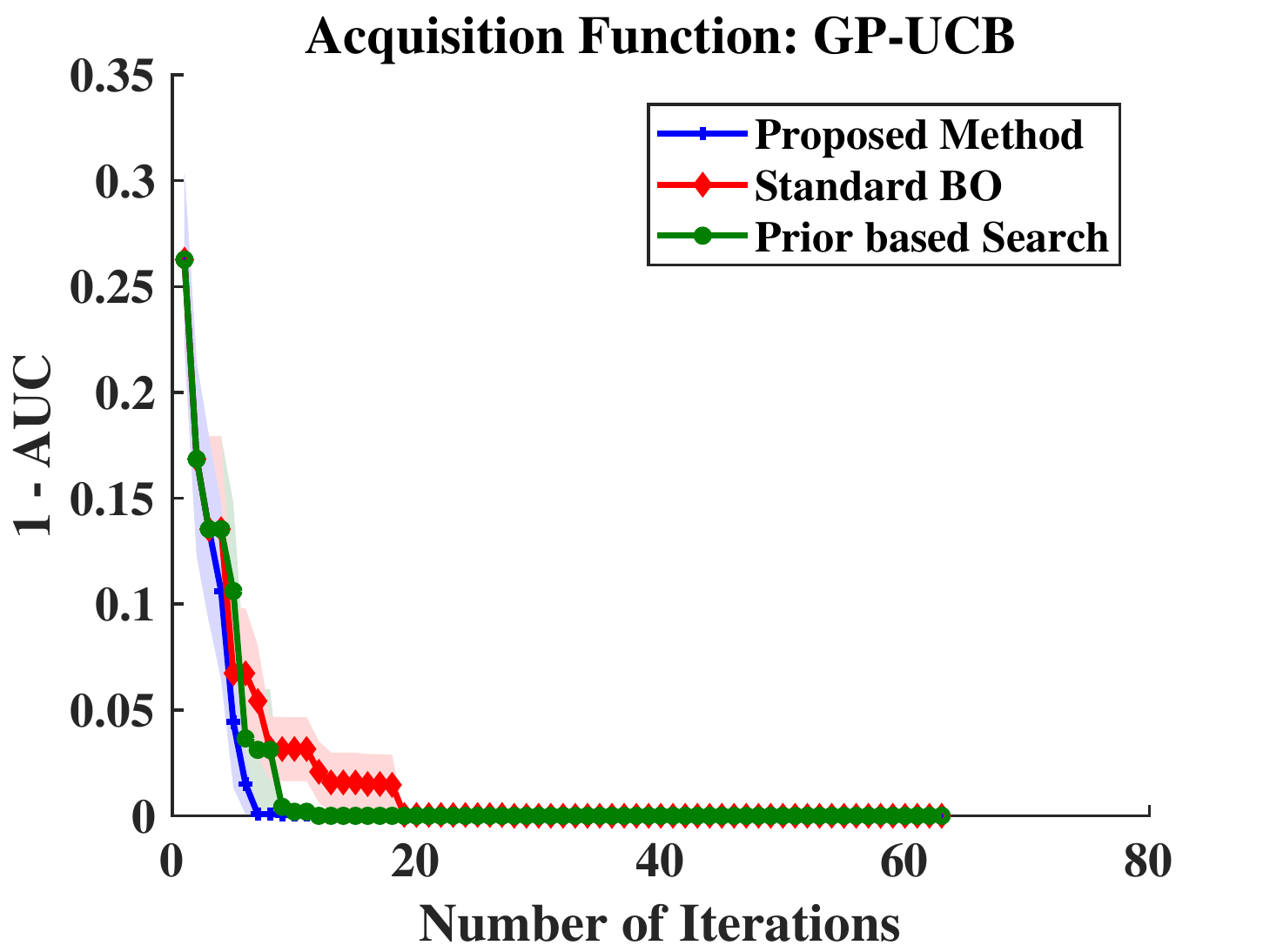}}
\par\end{centering}
\caption{Hyperparameter tuning (SVM with RBF kernel): 1 - AUC vs optimisation
iterations. (a) \& (b) dataset - `german numer'. (c) \& (d) dataset
- `four class'.}
\end{figure*}

\subsubsection{Support Vector Machine (SVM):}

Radial Basis Function (RBF) and sigmoid are two types of kernels used
in SVM. The hyperparameters to tune for RBF kernel, i.e. $k({\bf x},{\bf x}^{'})=\exp\left(-\gamma\parallel{\bf x}-{\bf x}^{'}\parallel^{2}\right)$
are kernel parameter ($\gamma$) and cost parameter ($C$). We set
the range for and $\gamma$ as $\left[10^{-15},10^{3}\right]$ and
for $C$ as $\left[10^{-5},10^{15}\right]$. Similarly for sigmoid
kernel, i.e. $k({\bf x},{\bf x}^{'})=\tanh\left(-\gamma{\bf x}^{T}{\bf x}^{'}+C\right)$,
we have kernel parameter ($\gamma$) and cost parameter ($C$). The
range for $\gamma$ is set as $\left[10^{-15},10^{3}\right]$ and
for $C$ is $\left[10^{-5},10^{15}\right]$ \citep{Van_Hutter_18Hyperparameter}.
For both kernels, and for both hyperparameters, we work in exponent
space. For both kernel types, we perform experiments on two different
binary classification datasets - `german numer' and `four class' from
LibSVM repository \citep{Chang_Lin_11Libsvm}. For both experiments,
we randomly choose $70\%$ of the data and used for training the classifier.
The rest $30\%$ is used for validation. The validation performance
of hyperparameter tuning (measured via AUC) is optimised as a function
of hyperparameter values. For all experiments, we use truncated gamma
distribution as prior with its shape parameter $\alpha$ and inverse
scale parameter $\beta$ estimated from \citep{Van_Hutter_18Hyperparameter}.
The shape and inverse scale parameter values used while performing
the experiments are given in \textbf{Table \ref{tab:Parameter values of truncated gamma distribution}}.
\begin{figure*}
\begin{centering}
\subfloat[\label{fig:Hyperparameter-tuning:exp1 - SVMsigmoid EI - Chapter 5}]{\centering{}\includegraphics[width=0.4\textwidth]{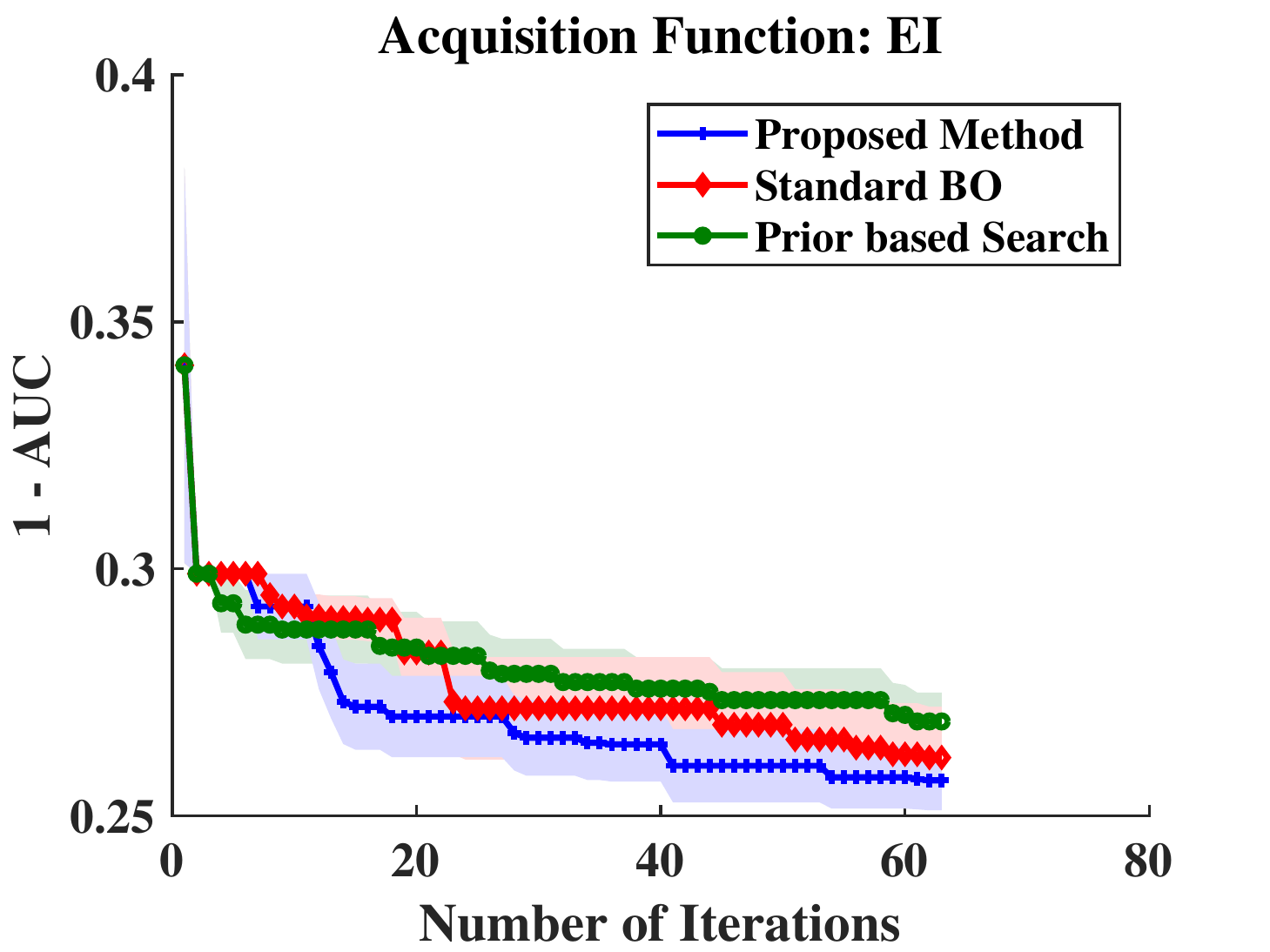}}\subfloat[\label{fig:Hyperparameter-tuning:exp1 - SVMsigmoid GP-UCB - Chapter 5}]{\centering{}\includegraphics[width=0.4\textwidth]{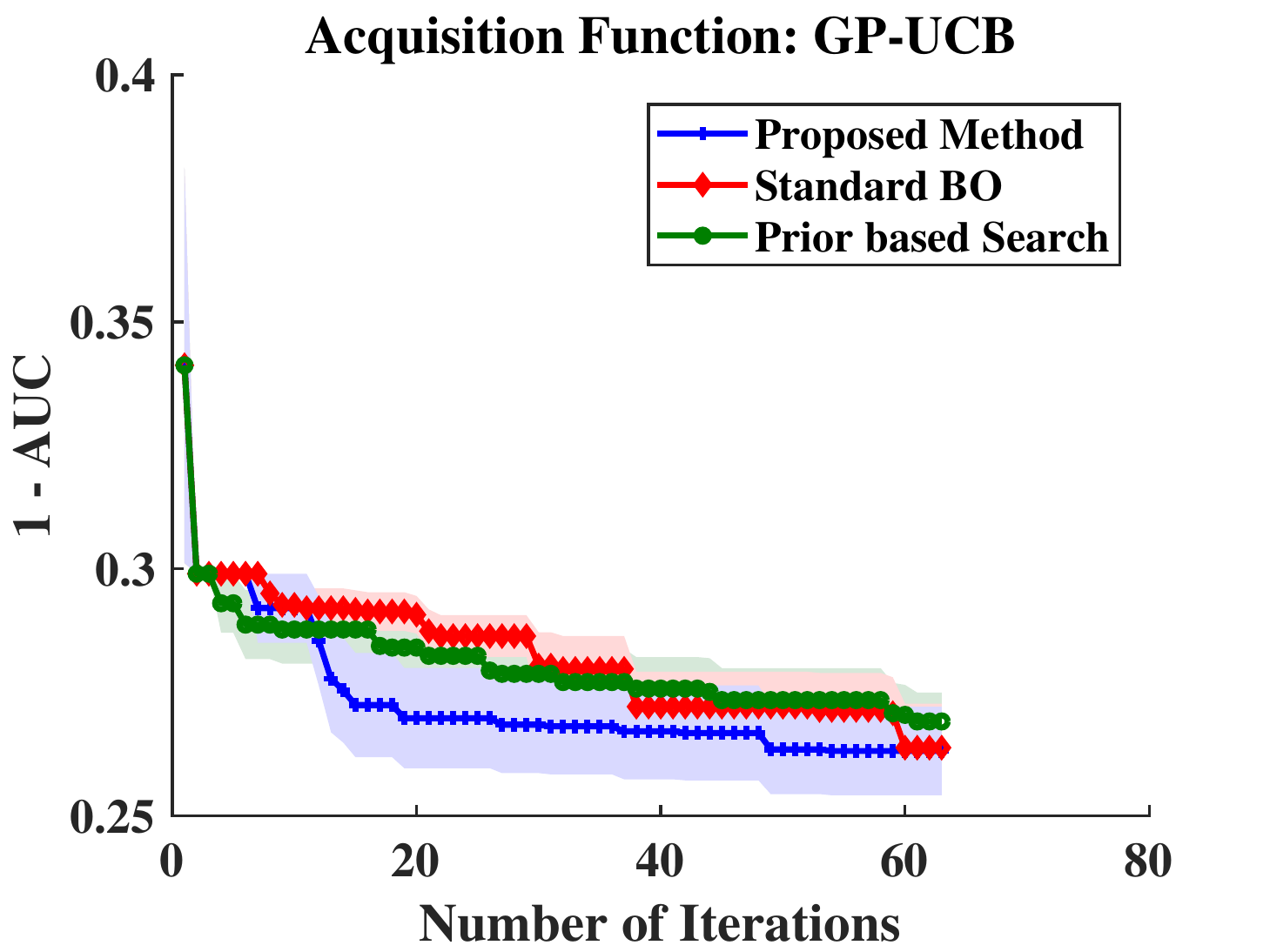}}
\par\end{centering}
\begin{centering}
\subfloat[\label{fig:Hyperparameter-tuning: exp2 SVMsigmoid EI - Chapter 5}]{\centering{}\includegraphics[width=0.4\textwidth]{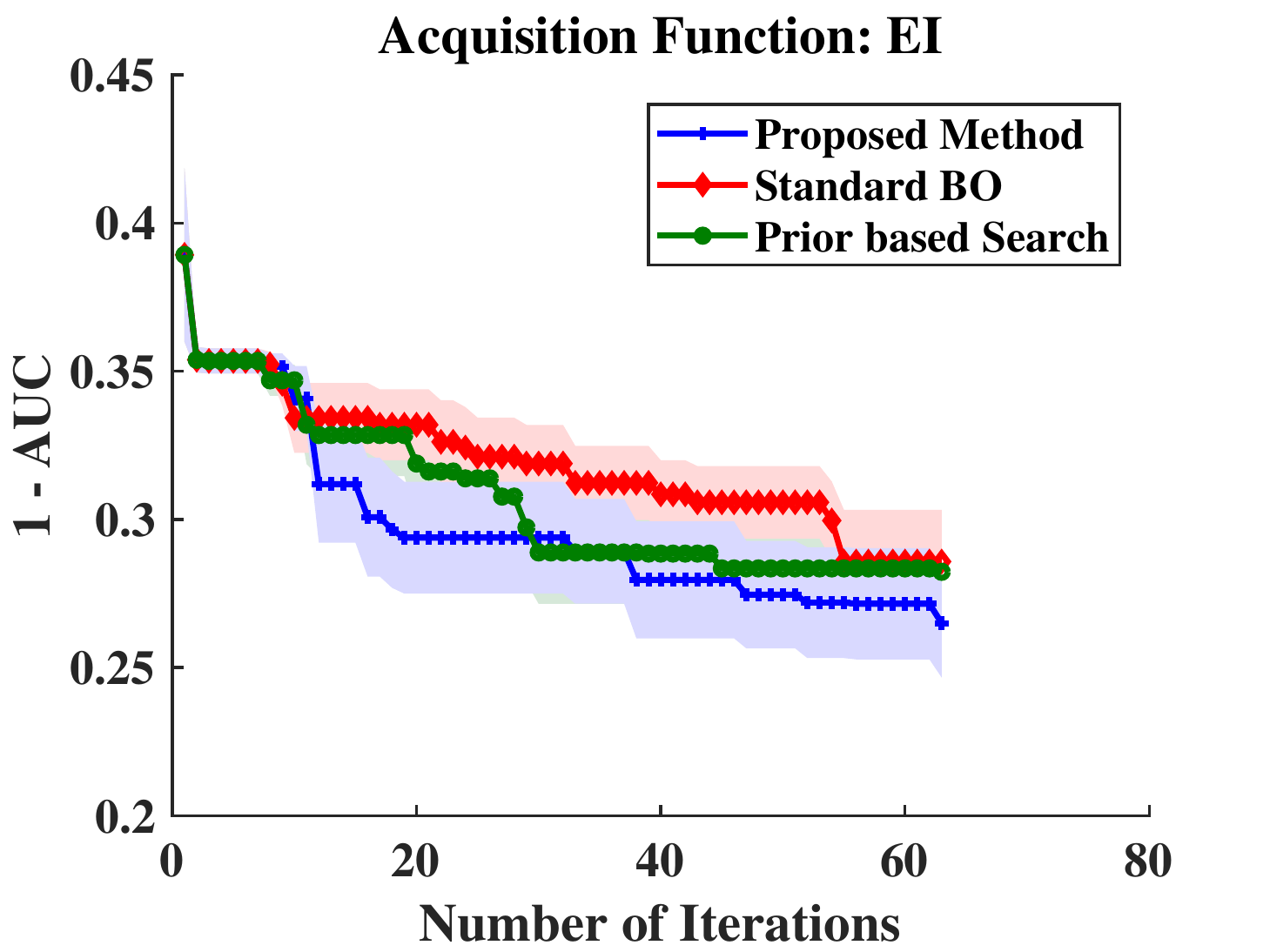}}\subfloat[\label{fig:Hyperparameter-tuning: exp2 SVMsigmoid GP-UCB - Chapter 5}]{\centering{}\includegraphics[width=0.4\textwidth]{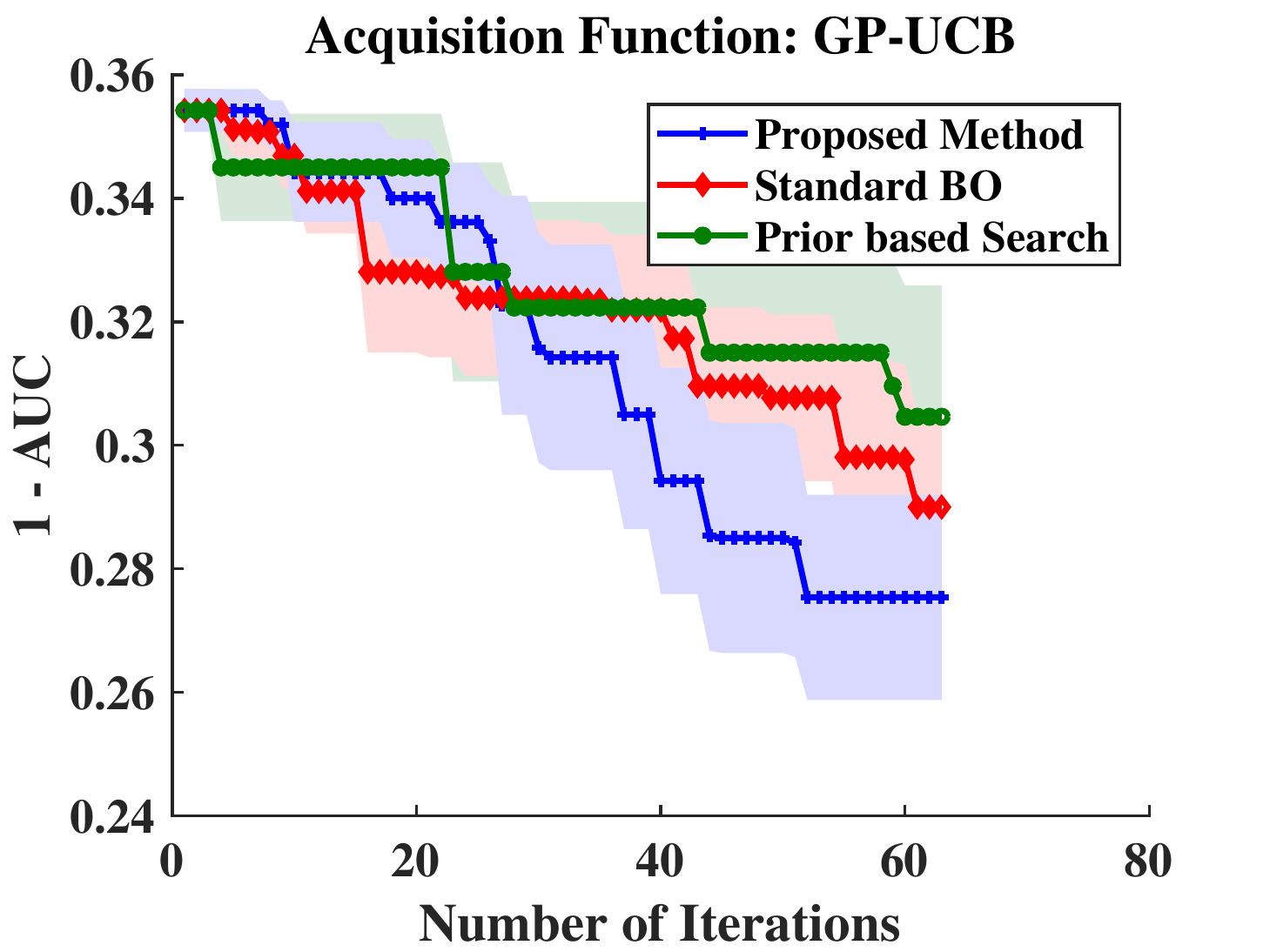}}
\par\end{centering}
\caption{Hyperparameter tuning (SVM with sigmoid kernel): 1 - AUC vs optimisation
iterations. (a) \& (b) dataset - `german numer'. (c) \& (d) dataset
- `four class'.}
\end{figure*}
 Figures \ref{fig:Hyperparameter-tuning:exp1 - SVMRBF EI - Chapter 5}
and \ref{fig:Hyperparameter-tuning:exp1 - SVMRBF GP-UCB - Chapter 5}
shows 1 - AUC performance on a held-out validation set (using both
EI and GP-UCB) while tuning hyperparameters of SVM with RBF kernel
on `german numer' dataset. Using the selected prior, our proposed
method is able to outperform the other two baselines. Figures \ref{fig:Hyperparameter-tuning: exp2 SVMRBF EI - Chapter 5}
and \ref{fig:Hyperparameter-tuning: exp2 SVMRBF GP-UCB - Chapter 5}
shows 1 - AUC performance on a held-out validation set (using both
EI and GP-UCB) while tuning hyperparameters of SVM with RBF kernel
on `four class' dataset. Here also our proposed Bayesian optimisation
method is showing better performance than baselines using the selected
prior. Similarly, the 1 - AUC performance on a held-out validation
set using both EI and GP-UCB acquisition functions while tuning hyperparameters
of SVM with sigmoid kernel on `german numer' dataset and on `four
class' dataset are respectively shown in Figures \ref{fig:Hyperparameter-tuning:exp1 - SVMsigmoid EI - Chapter 5},
\ref{fig:Hyperparameter-tuning:exp1 - SVMsigmoid GP-UCB - Chapter 5},
\ref{fig:Hyperparameter-tuning: exp2 SVMsigmoid EI - Chapter 5} and
\ref{fig:Hyperparameter-tuning: exp2 SVMsigmoid GP-UCB - Chapter 5}.
With the selected prior, our method clearly shows superiority over
other two baselines.

\subsubsection{Random Forest:}

\begin{figure*}
\begin{centering}
\subfloat[\label{fig:Hyperparameter-tuning:exp 1 RF1 EI - Chapter 5}]{\centering{}\includegraphics[width=0.4\textwidth]{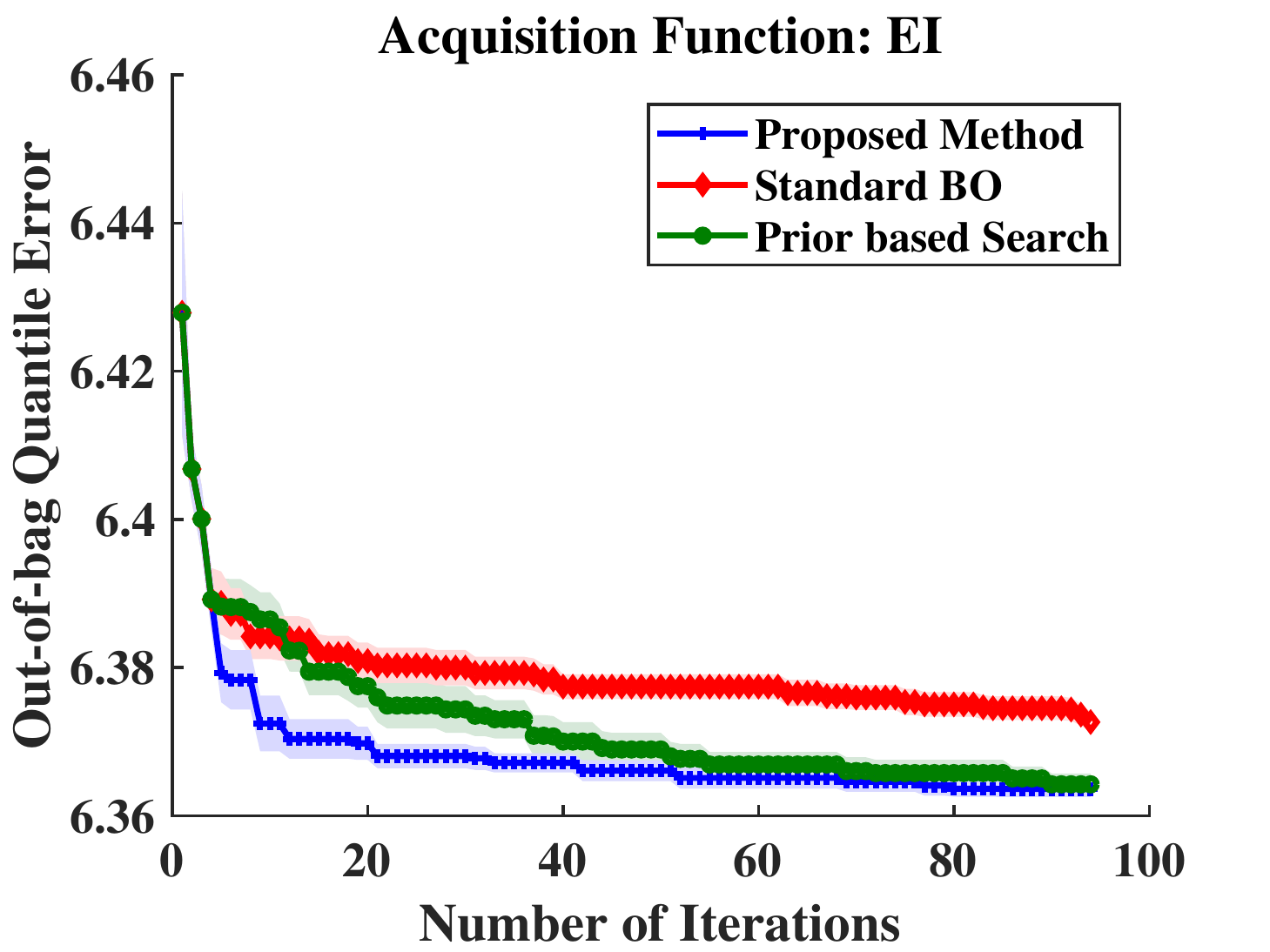}}\subfloat[\label{fig:Hyperparameter-tuning:exp 1 RF1 GP-UCB - Chapter 5}]{\centering{}\includegraphics[width=0.4\textwidth]{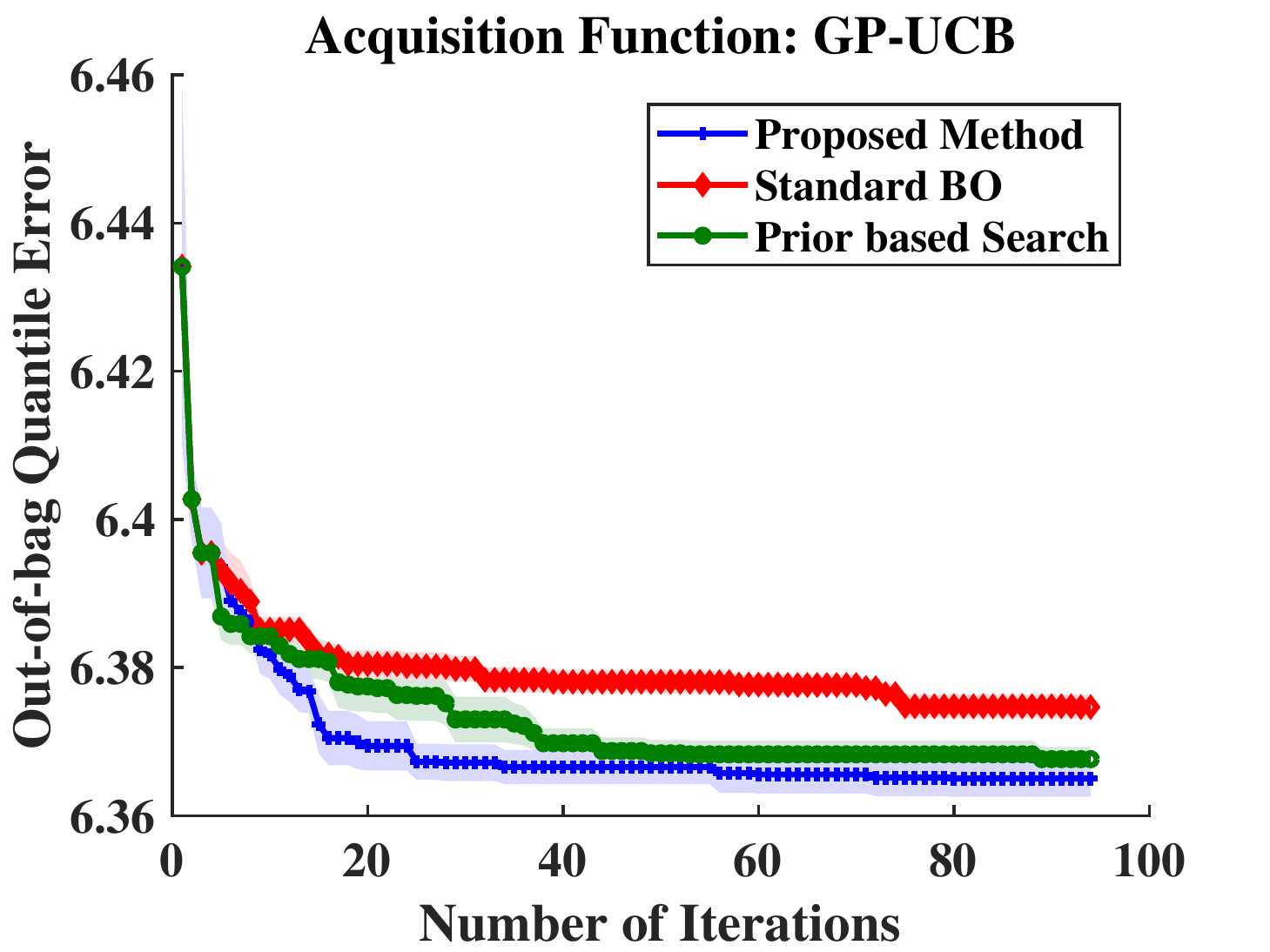}}
\par\end{centering}
\begin{centering}
\subfloat[\label{fig:Hyperparameter-tuning: exp2 RF2 EI - Chapter 5}]{\centering{}\includegraphics[width=0.4\textwidth]{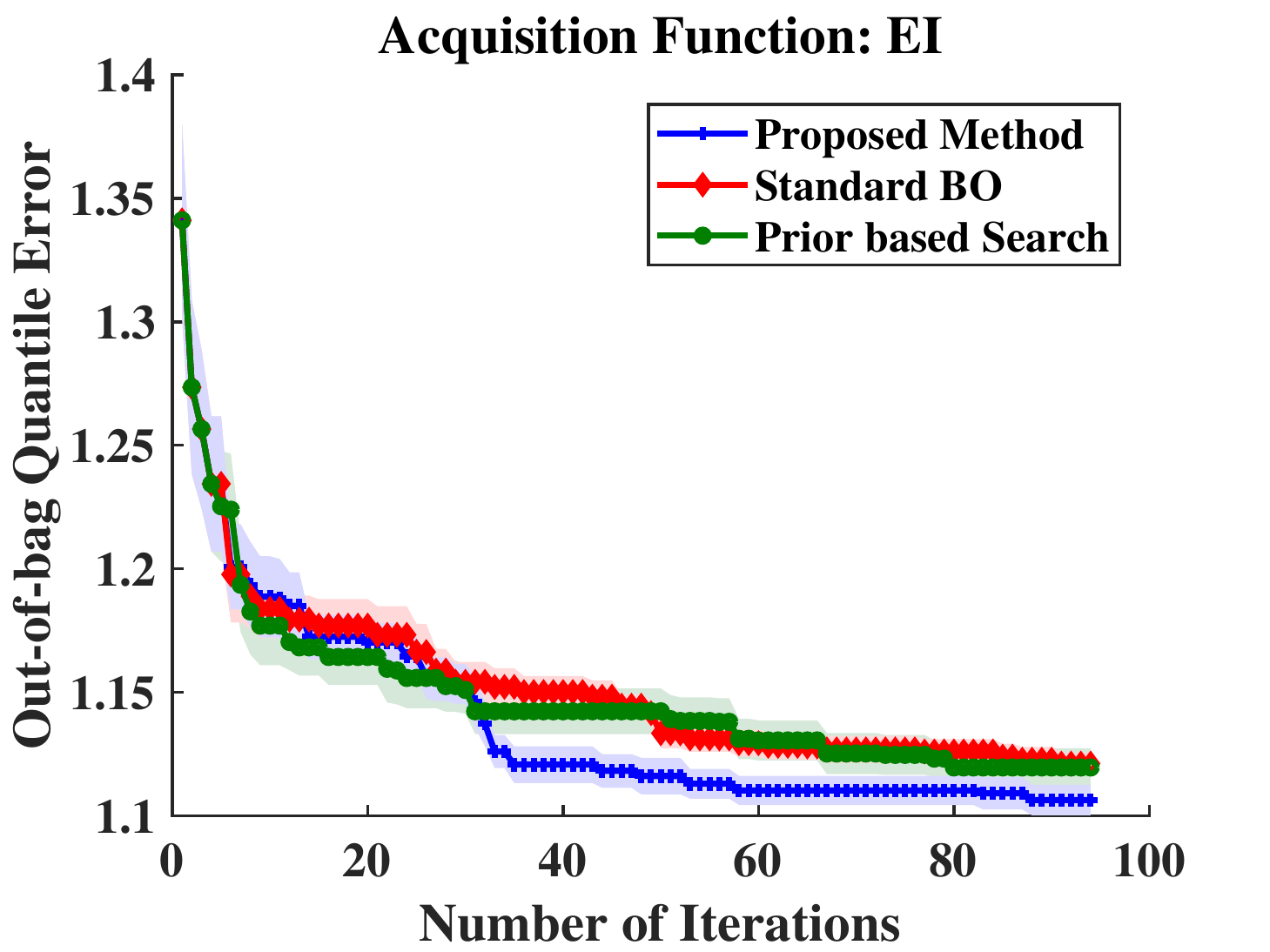}}\subfloat[\label{fig:Hyperparameter-tuning: exp2 RF2 GP-UCB - Chapter 5}]{\centering{}\includegraphics[width=0.4\textwidth]{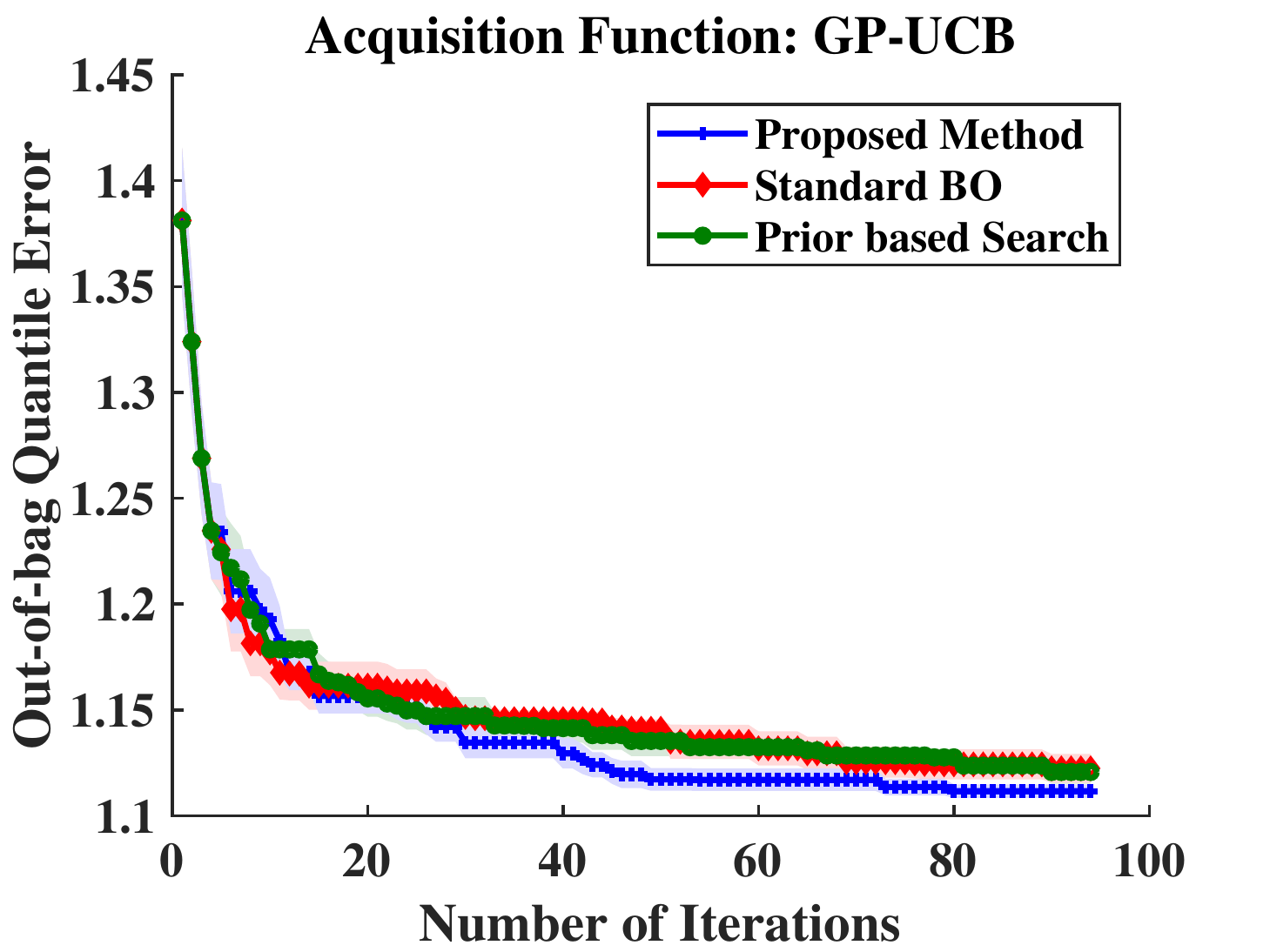}}
\par\end{centering}
\caption{Hyperparameter tuning (Random Forest): Out-of-bag quantile error with
respect to iterations. (a) \& (b) dataset - `forest-fires'. (c) \&
(d) dataset - `carsmall'.}
\end{figure*}
The hyperparameters to tune for random forest are the `minimum samples
per leaf', `maximum number of features sampled per node' and `number
of trees'. The range for `minimum samples per leaf' and `maximum number
of features sampled per node' are set as $\left[1,20\right]$ and
$\left[0.1,0.9\right]$ respectively \citep{Van_Hutter_18Hyperparameter}.
For both of these hyperparameters, we selected truncated gamma distribution
as prior distribution. The range for the `number of trees' is set
as $\left[10,300\right]$. Since there is no clear prior knowledge
available for this hyperparameter, we does not warp the search space
along this dimension. Here the model performance (measured via quantile
error) is optimised as a function of hyperparameters values. We consider
a regression dataset called `forest-fires' - from UCI machine learning
repository \citep{Cortez_Morais_07Adata}. Here the goal is to predict
the burned area of forest fires using some meteorological and other
data. We train the random forest and returns the out-of-bag quantile
error based on the median. For the prior distribution on `minimum
samples per leaf' and `maximum number of features sampled per node',
the shape parameter, $\alpha$ is set as $0.5$ and inverse scale
parameter, $\beta$ as $1$. Figures \ref{fig:Hyperparameter-tuning:exp 1 RF1 EI - Chapter 5}
and \ref{fig:Hyperparameter-tuning:exp 1 RF1 GP-UCB - Chapter 5}
shows out-of-bag quantile error with respect to iterations for our
method and the baselines. With the prior selected, the proposed method
clearly outperforms the baselines. For this dataset, the baseline
Prior based Search performs better than the baseline Standard BO.
We also consider another dataset - `carsmall' from Statistics and
Machine Learning Toolbox. The goal is to predict the median fuel economy
of a car given its various features. For the prior distribution on
`minimum samples per leaf' and `maximum number of features sampled
per node', the shape parameter, $\alpha$ is set as $1$ and inverse
scale parameter, $\beta$ as $0.5$. Out-of-bag quantile error with
respect to iterations is shown in Figure \ref{fig:Hyperparameter-tuning: exp2 RF2 EI - Chapter 5}.
The proposed method shows better performance than other two baselines.

\section{Conclusion\label{sec:conclusion}}

We proposed a novel Bayesian optimisation framework incorporating
prior assumption regarding the optima location of the objective function.
The knowledge about the optima location is represented through a prior
distribution which is then used to warp the search space around the
optima location in such a way that the regions around the optima are
expanded and regions away from the optima are shrunk. We used a well-known
approach called probability integrity transform for warping the search
space. Warped information is incorporated directly into Gaussian process
by redefining its kernel. This enabled the proposed method agnostic
to any acquisition function. We also proved that the redefined (warped)
kernel is a valid Mercer kernel and with this kernel the convergence
of Bayesian optimisation is always guaranteed. Our experiments with
diverse optimisation tasks illustrated the superiority of our proposed
method over standard Bayesian optimisation method and prior based
random search.

\subsection*{Acknowledgment}

This research was partially funded by the Australian Government through
the Australian Research Council (ARC). Prof Venkatesh is the recipient
of an ARC Australian Laureate Fellowship (FL170100006).

\bibliographystyle{elsarticle-num-names}
\bibliography{KBS}

\end{document}